\def\eqref#1{equation~\ref{#1}}
\def\1{\bm{1}}
\DeclareMathAlphabet{\mathsfit}{\encodingdefault}{\sfdefault}{m}{sl}
\SetMathAlphabet{\mathsfit}{bold}{\encodingdefault}{\sfdefault}{bx}{n}
\newcommand{\R}{\mathbb{R}}
\newtheorem{theorem}{Theorem}
\newtheorem{proposition}{Proposition}
\newtheorem{assumption}{Assumption}
\newtheorem{lemma}{Lemma}
\newcommand{\mdp}{\calM}
\newcommand{\trans}{p}
\newcommand{\sspace}{\mathcal{S}}
\newcommand{\aspace}{\mathcal{A}}
\newcommand{\rspace}{\mathcal{R}}
\newcommand{\setpolicies}{\Pi}
\newcommand{\optimalr}{r_*}
\newcommand{\smdp}{\hat \calM}
\newcommand{\otrans}{\hat p}
\newcommand{\ospace}{\mathcal{O}}
\newcommand{\lspace}{ \mathcal{L}}
\newcommand{\orspace}{ \mathcal{\hat R}}
\newcommand{\optionr}{r}
\newcommand{\optionl}{l}
\newcommand{\opi}{\pi}
\newcommand{\opistar}{\opi}
\newcommand{\otransmatrix}{P_{\opi}}
\newcommand{\olimitingmatrix}{P_{\opi}^\infty}
\newcommand{\olimitingmatrixstar}{P_{\opi^*}^\infty}
\newcommand{\ofundmatrix}{Z_{\opi}}
\newcommand{\ooptimalr}{\hat r_*}
\newcommand{\oonestepr}{r_{\opi}}
\newcommand{\oonestepl}{l_{\opi}}
\newcommand{\ostationarydist}{d_{\opi}}
\newcommand{\osetpolicies}{\hat {\Pi}}
\newcommand{\osetoptimalpolicies}{\hat \Pi_{*}}
\newcommand{\ooptimalrecurrentstates}{{R}_{*}}
\newcommand{\orecurrentstates}{R_{\opi}}
\newcommand{\ooptimalnbrecurrentclasses}{n_*}
\newcommand{\ispace}{\mathcal{I}}
\newcommand{\GRVIQsolutionq}{\mathcal{Q}_\#}
\newcommand{\GRVIQsolutionrbar}{r_\#}
\newcommand{\GDiffQsolutionq}{\mathcal{Q}_\infty}
\newcommand{\bbE}{\mathbb{E}}
\newcommand{\bbR}{\mathbb{R}}
\newcommand{\calS}{\mathcal{S}}
\newcommand{\calO}{\mathcal{O}}
\newcommand{\calI}{\mathcal{I}}
\newcommand{\calF}{\mathcal{F}}
\newcommand{\calM}{\mathcal{M}}
\newcommand{\calQ}{\mathcal{Q}}
\newcommand{\calV}{\mathcal{V}}
\newcommand{\bfI}{\mathbf{I}}
\newcommand{\norm}[1]{\left\lVert#1\right\rVert}
\newcommand{\abs}[1]{\left\lvert#1\right\rvert}
\newcommand{\cardS}{{\abs{\calS}}}
\newcommand{\cardO}{{\abs{\calO}}}
\newcommand{\cardI}{{\abs{\calI}}}
\title{On Convergence of Average-Reward Off-Policy Control Algorithms in Weakly Communicating MDPs}
\author{%
  Yi Wan\\
  University of Alberta\\
  Edmonton, Canada\\
  \texttt{\{wan6\}@ualberta.ca}
  \And
  Richard S. Sutton\\
  University of Alberta, DeepMind\\
  Edmonton, Canada\\
  \texttt{\{rsutton\}@ualberta.ca}
}
\begin{document}

\maketitle

\begin{abstract}
We show two average-reward off-policy control algorithms, Differential Q-learning (Wan, Naik, \& Sutton 2021a) and RVI Q-learning (Abounadi Bertsekas \& Borkar 2001), converge in weakly communicating MDPs. Weakly communicating MDPs are the most general MDPs that can be solved by a learning algorithm with a single stream of experience. The original convergence proofs of the two algorithms require that the solution set of the average-reward optimality equation only has one degree of freedom, which is not necessarily true for weakly communicating MDPs. To the best of our knowledge, our results are the first showing average-reward off-policy control algorithms converge in weakly communicating MDPs. As a direct extension, we show that average-reward options algorithms for temporal abstraction introduced by Wan, Naik, \& Sutton (2021b) converge if the Semi-MDP induced by options is weakly communicating. 
\end{abstract}

\section{Introduction}
\label{sec: Introduction}
Modern reinforcement learning algorithms are designed to maximize the agent's goal in either the episodic setting or the continuing setting. In both settings, there is an agent continually interacting with its world, which is usually assumed to be a Markov Decision Process (MDP). For episodic problems, there is a special terminal state and a set of start states. If the agent reaches the terminal state, it will be reset to one of the start states. Continuing problems are different in that there is no terminal state, and the agent will never be reset by the world. For continuing problems, two commonly considered objectives are the discounted objective and the average-reward objective. The discount factor in the discounted objective has been observed to be deprecated in the function approximation control setting, suggesting that the average-reward objective might be more suitable for continuing problems.

In this paper, we extend the convergence results of two off-policy control algorithms for the average-reward objective from a sub-class of MDPs to the most general class of MDPs that could be solved by algorithms learning from a single stream of experience. These algorithms learn a policy that achieves the best possible average-reward rate, using data generated by some other policy that the agent may not have control of.  Designing convergent off-policy algorithms for the average-reward objective is challenging. While there are several off-policy learning algorithms in the literature, the only known convergent algorithms are SSP Q-learning and RVI Q-learning, both by Abounadi, Bertsekas, \& Borkar (2001), the algorithm by Ren \& Krogh (2001), and Differential Q-learning by Wan, Naik, \& Sutton (2021a). Others either do not have a convergence theory (Schwartz 1993, Singh 1994; Bertsekas \& Tsitsiklis 1996, Das 1999) or have incorrect proof (Yang 2016, Gosavi 2004). \footnote{See Appendix D in Wan et al.\ (2021a) for a discussion about Yang's proof and see \cref{sec: Gosavi incorrect proof} of this paper for a discussion about Gosavi's proof.} 

The algorithm by Ren \& Krogh (2001) requires knowledge of properties of the MDP which are not typically known. The convergence of SSP Q-learning requires knowing a state that is recurrent under all policies. The convergence of the RVI Q-learning algorithm (Abounadi et al. 2001) was developed for unichain MDPs, which just means that the Markov chain induced by any stationary policy is unichain \footnote{A Markov chain is unichain if there is only one recurrent class in the Markov chain, plus a possibly empty set of transient states.}. The convergence of Differential Q-learning (Wan et al. 2021a) requires a weaker assumption -- the solution set of the average-reward optimality equation (formally defined later in \eqref{eq: action-value optimality equation}) only has one degree of freedom (all the solutions are different by a constant vector). This assumption can be satisfied if, for example, all optimal policies are unichain. It is clear that RVI Q-learning also converges under this assumption. 

It is not rare that the solution set of the average-reward optimality equation has more than one degree of freedom (e.g., the MDP at the bottom of \cref{fig:different mdps}). In this case, the proofs of RVI Q-learning and Differential Q-learning would not go through. Technically, this is because both two proofs require that the uniqueness of the solution of the action values up to an additive constant (one degree of freedom) in the average-reward optimality equation, so that there is a unique equalibrium in the ordinary differential equations associated with the two algorithms.

\begin{wrapfigure}{r}{0.5\textwidth}
    \vspace{-5mm}
    \centering
    \includegraphics[width=0.5\textwidth]{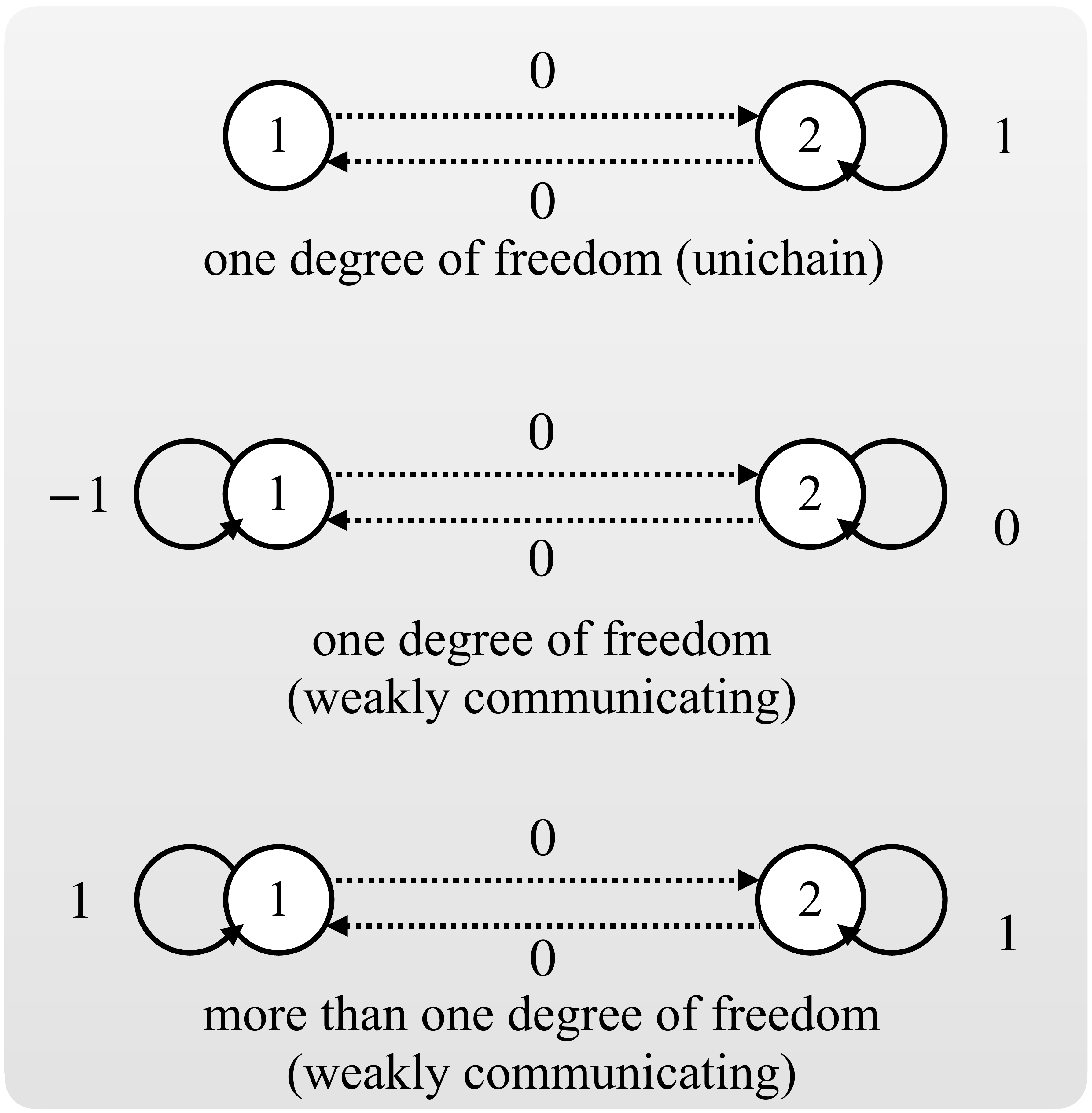}
\caption{Examples of three different types of MDPs. In each of the three MDPs, there are two states marked by two circles respectively. There are two actions \texttt{solid} and \texttt{dashed}, both causing deterministic effects. \emph{Top}: The solution set of $q$ in the average-reward optimality equation (\eqref{eq: action-value optimality equation}) is $\{q(1, \texttt{dashed}) = c-1, q(2, \texttt{solid}) = c, q(2, \texttt{\texttt{dashed}}) = c - 2, \forall c \in \bbR \}$ and has one degree of freedom. The Markov chain under every stationary policy is unichain. \emph{Middle}: The solution set of $q$ is $\{q(1, \texttt{solid}) = c-3, q(1, \texttt{dashed}) = c-1, q(2, \texttt{solid}) = c, q(2, \texttt{\texttt{dashed}}) = c - 2, \forall c \in \bbR \}$ and has one degree of freedom. The MDP is weakly-communicating. \emph{Down}: The solution set of $q$ is has more than one degree of freedom. Note that both $\{q(1, \texttt{solid}) = 0, q(1, \texttt{dashed}) = -2, q(2, \texttt{solid}) = -1, q(2, \texttt{dashed}) = -1\}$ and $\{q(1, \texttt{solid}) = 0, q(1, \texttt{dashed}) = -1, q(2, \texttt{solid}) = 0, q(2, \texttt{dashed}) = -1\}$ are solutions of $q$ in \eqref{eq: action-value optimality equation} and these two solutions are not different by a constant vector. The MDP is weakly-communicating.}
\vspace{-15mm}
\label{fig:different mdps}
\end{wrapfigure}

A more general class of MDPs, called weakly communicating MDPs, may have more than a single degree of freedom in the solution set of the associated optimality equation.
The definition of these MDPs is also natural: except for a possibly empty set of states that are transient under every policy, all states are reachable from every other state in a finite number of steps with a non-zero probability. It has been observed that the set of weakly communicating MDPs is the most general set of MDPs such that there exists a learning algorithm that can, using a single stream of experience, guarantee to identify a policy that achieves the optimal average reward rate in the MDP (Barlett \& Tewari 2009).

In this paper, we show the convergence of RVI Q-learning and Differential Q-learning in weakly communicating MDPs, without requiring any additional assumptions compared with their original convergence theories. Two key steps in our proof are 1) showing that the solution sets of the two algorithms are non-empty, closed, bounded, and connected, and 2) showing that $0$ is the unique solution for the average-reward optimality equation when all rewards are $0$. With these two results, we use asynchronous stochastic approximation results by Borkar (2009) to show convergence to the solution sets. As a direct extension of the above results, we also show the convergence of two algorithms that extend the Differential Q-learning algorithm to the options framework, introduced by Wan et al. (2021b), if the Semi-MDP induced by a given MDP and a given set of options is weakly communicating.

\section{Preliminaries}

Consider a finite Markov decision process, defined by the tuple $\mdp \doteq (\sspace, \aspace, \rspace, \trans)$, where $\sspace$ is a set of states, $\aspace$ is a set of actions, $\rspace$ is a set of rewards, and $p : \sspace \times \rspace \times \sspace \times \aspace \to [0, 1]$ is the dynamics of the MDP. 
At each time step $t$, the agent observes the state of the MDP $S_t\in\sspace$ and chooses an action $A_t\in\aspace$ using some policy $b:\aspace \times \sspace \rightarrow[0, 1]$, then receives from the environment a reward $R_{t+1}\in\rspace$ and the next state $S_{t+1}\in\sspace$, and so on. The transition dynamics are defined as $\trans(s', r \mid s, a) \doteq \Pr(S_{t+1} = s', R_{t+1} = r \mid S_t = s, A_t = a)$ for all $s, s' \in \sspace, a \in \aspace$, and $r \in \rspace$. Denote the set of stationary Markov policies $\setpolicies$. 

The reward rate of a policy $\pi$ starting from a given start state $s$ can be defined as:
\begin{align}
\label{eq:avg_rew_definition}
    r(\pi,s) \doteq \lim_{n \to \infty} \frac{1}{n} \sum_{t = 1}^n \mathbb{E}[R_t \mid S_0=s, A_{0:t-1} \sim \pi].
\end{align}
Given an arbitrary MDP, the agent may not even be able to visit all states and would therefore miss the chance of learning, for every state $s$, a policy that achieves the optimal reward rate $\sup_\pi r(\pi, s)$ and the agent can at most learn an optimal policy for a set of states, each of which is reachable from every other state. Such a set of states is often called \emph{communicating}. Formally speaking, we say a set of states \emph{communicating}, if there exists a policy such that moving from either one state in the set to the other one in the set in a finite number of steps has a positive probability. If the entire state space of an MDP is communicating, we say the MDP \emph{communicating}. \emph{Weakly communicating} MDPs generalize over communicating MDPs. In weakly communicating MDPs, in addition to a \emph{closed} communicating set of states, there is a possibly empty set of states that are transient under every policy. 



For weakly communicating MDPs, there exists a unique optimal reward rate $\optimalr$, which does not depend on the start state. We say a policy is optimal if it achieves $\optimalr$ regardless of the start state. The goal of an off-policy control algorithm is to learn an optimal policy from the stream of experience $\ldots, S_t, A_t, R_{t+1}, S_{t+1}, \ldots$ generated by a behavior policy that is not necessarily the same as the agent's learned policy. Both RVI Q-learning and Differential Q-learning achieve this goal by solving $\bar r$ and $q$ in the optimality equation:
\begin{align}
    q(s, a) &= \sum_{s', r} \trans (s', r \mid s, a) (r - \bar r + \max_{a'} q(s', a')), \quad \forall\ s \in \sspace, a \in \aspace \label{eq: action-value optimality equation}.
\end{align}
It is known that $\optimalr$ is the unique solution of $\bar r$ and any greedy policy w.r.t. any solution of $q$ is an optimal policy. In addition, shifting any solution of $q$  by any constant vector results in the other solution of $q$. Finally, unlike in unichain MDPs, where all solutions of $q$ are different by some constant vector, in weakly communicating MDPs, solutions of $q$ may have multiple degrees of freedom. That is, if $q_1, q_2$ are both solutions of $q$, it is possible that $q_1 \neq q_2 + ce, \forall\ c \in \bbR$, where $e$ denotes the all-one vector.

If the agent has a set of options, it may choose to execute these options. Each option $o$ in $\ospace$ has two components: the option's \textit{policy} $\pi^o: \aspace \times \sspace \to [0, 1]$, and the termination probability $\beta^o: \sspace \to [0, 1]$.
For simplicity, for any $s \in \sspace, o \in \ospace$, we use $\pi(a \mid s, o)$ to denote $\pi^o(a, s)$ and $\beta(s, o)$ to denote $\beta^o(s)$. If the agent executes option $o$ at state $s$, the option's policy is followed, until the option terminates.
Let $\lspace$ be the set of all possible lengths of options and $\orspace$ be the set of all possible cumulative rewards. Note that $\lspace$ and $\orspace$ are possibly countably infinite. Let $\otrans (s', r, l \mid s, o)$ be, when executing option $o$ starting from state $s$, the probability of terminating at state $s'$, with cumulative reward $r$ and length $l$. Formally, for any $s, s' \in \sspace, o \in \ospace, r \in \orspace, l \in \lspace$, $\otrans$ can be defined recursively in the following way:
\begin{align}
    & \otrans (s', r, l \mid s, o) \doteq \sum_{a} \pi(a \mid s, o) \sum_{\tilde s, \tilde r} \trans (\tilde s, \tilde r \mid s, a) \nonumber\\
    & [\beta(\tilde s, o) \bfI(\tilde s =s', \tilde r = r, \tilde l = 1) + (1 - \beta(\tilde s, o)) \otrans (s', r - \tilde r, l - 1 \mid \tilde s, o)],
\end{align}
where $\bfI$ is an indicator function.

An MDP $\mdp$ and a set of options $\ospace$ results in a Semi-MDP (SMDP) $\smdp \doteq (\sspace, \ospace, \lspace, \orspace, \otrans)$. 

Given an MDP and a set of options, if the agent chooses options using a meta policy, which is a policy that chooses from options, ${\opi}: \ospace \times \sspace \to [0, 1]$ and executes these options, we denote the sequence of option transitions by $\ldots, \hat S_n, \hat O_n, \hat R_{n+1}, \hat S_{n+1}, \ldots$. For the associated SMDP, the \textit{reward rate} of $\opi$ given a start state $s$ can be defined as $\optionr^C(\opi, s) \doteq \lim_{t \to \infty} \bbE_{\opi} [\sum_{i = 1}^t R_i \mid S_0 = s] / t$ or $\optionr(\opi, s) \doteq \lim_{n \to \infty} \bbE_{\opi} [\sum_{i = 0}^n \hat R_i \mid \hat S_0 = s] / \bbE_{\opi} [\sum_{i = 0}^n \hat L_i \mid \hat S_0 = s]$. Both limits exist and are equivalent (by Puterman's (1994) Propositions 11.4.1 and 11.4.7) under the following assumption: 

\begin{assumption}\label{assu: option assumption}
For each option $o \in \ospace$, when executing the option, there is a non-zero probability of terminating the option after at most $\cardS$ stages, regardless of the state at which this option is initiated.
\end{assumption}

\begin{proposition}\label{prop: finite expectation and variance}
Under \cref{assu: option assumption}, the expected value as well as the variance of the execution time and cumulative reward of every option at each state exist and are finite.
\end{proposition}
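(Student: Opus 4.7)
The plan is to exploit \cref{assu: option assumption} to obtain a geometric tail bound on the execution length of each option, and then transfer that bound to the cumulative reward using the fact that $\rspace$ is a finite set and hence rewards are uniformly bounded.

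First, fix an option $o\in\ospace$ and a start state $s\in\sspace$, and let $L$ denote the (random) number of primitive steps until $o$ terminates when initiated at $s$. By \cref{assu: option assumption} there exists $p_o>0$ such that for every $\tilde s \in\sspace$, the probability of $o$ terminating within $\cardS$ steps when started at $\tilde s$ is at least $p_o$. Because the underlying MDP is Markovian and the option's policy $\pi^o$ depends only on the current state, conditional on the event $\{L>k\cardS\}$, the state at time $k\cardS$ is some $\tilde s\in\sspace$ and the option again has probability at least $p_o$ of terminating within the next $\cardS$ steps. Iterating this argument gives
\begin{equation*}
\Pr(L>k\cardS)\le(1-p_o)^{k},\qquad k=0,1,2,\ldots.
\end{equation*}

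Second, I would convert this geometric tail bound into moment bounds. For any nonnegative integer-valued random variable $L$,
\begin{equation*}
\bbE[L]=\sum_{n=0}^{\infty}\Pr(L>n)\le \cardS\sum_{k=0}^{\infty}\Pr(L>k\cardS)\le \frac{\cardS}{p_o},
\end{equation*}
and a similar computation using $\bbE[L^{2}]=\sum_{n=0}^{\infty}(2n+1)\Pr(L>n)$ together with the geometric tail bound yields $\bbE[L^{2}]<\infty$. Consequently both $\bbE[L]$ and $\Var(L)=\bbE[L^{2}]-\bbE[L]^{2}$ are finite.

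Third, let $R=\sum_{i=1}^{L}R_i$ be the cumulative reward accumulated during the option. Because $\rspace$ is a finite set, $R_{\max}\doteq\max_{r\in\rspace}|r|$ is finite, so $|R|\le R_{\max}\,L$ deterministically. Therefore $\bbE[|R|]\le R_{\max}\bbE[L]<\infty$ and $\bbE[R^{2}]\le R_{\max}^{2}\bbE[L^{2}]<\infty$, giving the existence and finiteness of $\bbE[R]$ and $\Var(R)$. Since the argument is uniform over the choice of $o\in\ospace$ and $s\in\sspace$ (each option contributes its own constant $p_o$), the claim holds for every option at every state. The only mildly subtle step is justifying the recursive tail bound rigorously via the strong Markov property applied at the stopping times $k\cardS$; everything else is bookkeeping.
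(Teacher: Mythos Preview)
Your argument is correct. The geometric tail bound on $L$ follows directly from \cref{assu: option assumption} and the Markov property (note that the times $k\cardS$ are deterministic, so the ordinary Markov property is enough; no need for the strong version). The passage from the tail bound to finiteness of $\bbE[L]$ and $\bbE[L^2]$ is standard, and the domination $|R|\le R_{\max}L$ using finiteness of $\rspace$ cleanly handles the cumulative reward.

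The paper takes a different, less elementary route: it recasts the execution of option $o$ as running a fixed policy in a stochastic shortest-path MDP with an added absorbing terminal state, observes that \cref{assu: option assumption} makes that policy \emph{proper} in the sense of Bertsekas (2007), and then invokes the known fact that proper policies in SSP-MDPs have finite expected cost (and, by analogous arguments, finite second moments). This buys brevity and a conceptual link to SSP theory, but at the price of an external reference. Your direct tail-bound computation is self-contained and arguably more transparent; in fact the SSP ``proper policy'' argument in Bertsekas unwinds to essentially the same geometric decay you derived, so the two proofs are close in spirit even though the packaging differs.
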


We say an SMDP is weakly communicating if the MDP with state space $\sspace$, action space $\ospace$, reward space $\orspace$, and transition function $\sum_{l} \otrans(s, r, l \mid s, o)$ is weakly communicating. Just as in the MDP setting, if the SMDP is weakly communicating, the optimal reward rate $\ooptimalr \doteq \sup_{\opi \in \osetpolicies} \optionr (\opi, s)$, where $\osetpolicies$ denotes the set of stationary Markov meta policies, does not depend on the start state $s$.  In addition, the solutions of $q$ may not be different by a constant vector.
Given an MDP and a set of options, the goal of the off-policy control problem is to find a policy that achieves $\ooptimalr$. Inter-option Differential Q-learning achieves this goal by solving the \emph{optimality} equation for SMDPs (Puterman 1994):
\begin{align}
    q(s, o) & = \sum_{s', r,\,l} \otrans(s', r,\,l \mid s, o) \big( r - \bar r \cdot l  + \max_{o'} q (s', o') \big), \label{eq: option-value optimality equation}
\end{align}
where $q$ and $\bar{r}$ denote estimates of the option-value function and the reward rate respectively. Just as in the MDP setting, $\bar r$ has $\ooptimalr$ as its unique solution, and solutions of $q$ may not be different by a constant vector.


Intra-option Differential-learning finds an optimal policy by solving the \emph{intra-option optimality} equation.
\begin{align}
    q(s, o) &= \sum_{a} \pi(a \mid s, o) \sum_{s', r} \trans (s', r \mid s, a) (r - \bar r + u^q_*(s', o)), \quad \forall\ s \in \sspace, o \in \ospace, \label{eq: intra-option-value optimality equation}
\end{align}
where
\begin{align}\label{eq: u_*}
    u^q_*(s', o) &\doteq \big( 1 - \beta(s', o) \big) q(s', o) + \beta(s', o) \max_{o'} q(s', o').
\end{align}

The following proposition shows that the set of solutions of \eqref{eq: intra-option-value optimality equation} is the same as that of \eqref{eq: option-value optimality equation}.

\begin{proposition}\label{prop: inter = intra equations}
Any solution of
\eqref{eq: option-value optimality equation} is also a solution of
\eqref{eq: intra-option-value optimality equation} and vice versa.
\end{proposition}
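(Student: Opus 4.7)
The plan is to prove the two implications separately by exploiting the recursive definition of $\otrans$. The key structural observation is that after one primitive action $a \sim \pi(\cdot \mid s, o)$ leading to $(\tilde s, \tilde r) \sim \trans(\cdot \mid s, a)$, the option either terminates (with probability $\beta(\tilde s, o)$, contributing $\max_{o'} q(\tilde s, o')$) or continues from $\tilde s$ under the same option; the quantity $u^q_*$ in \eqref{eq: u_*} is exactly the convex combination of these two cases, so the intra-option equation is essentially the inter-option equation with its recursive structure unrolled by one primitive step.

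For the (inter) $\Rightarrow$ (intra) direction, I would substitute the recursive expression for $\otrans$ directly into the right-hand side of \eqref{eq: option-value optimality equation}. The indicator term collapses to a one-step contribution $\beta(\tilde s, o)(\tilde r - \bar r + \max_{o'} q(\tilde s, o'))$, while the $(1-\beta)$-term, after reindexing $r' = r - \tilde r$ and $l' = l - 1$, becomes $(1 - \beta(\tilde s, o))$ times a nested sum against $\otrans(\cdot,\cdot,\cdot \mid \tilde s, o)$ of the integrand $(r' - \bar r \cdot l' + \max_{o'} q(s', o')) + (\tilde r - \bar r)$. Splitting this nested sum and invoking the inter-option equation at $(\tilde s, o)$ (plus the fact that $\otrans(\cdot,\cdot,\cdot \mid \tilde s, o)$ sums to one, which is where \cref{assu: option assumption} enters) collapses it to $q(\tilde s, o) + (\tilde r - \bar r)$. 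Combining the $\beta$ and $1-\beta$ contributions and recognising $u^q_*$ yields exactly the right-hand side of \eqref{eq: intra-option-value optimality equation}.

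For the (intra) $\Rightarrow$ (inter) direction, let $T(q)(s, o) \doteq \sum_{s', r, l} \otrans(s', r, l \mid s, o)(r - \bar r \cdot l + \max_{o'} q(s', o'))$, which is well-defined and finite by \cref{prop: finite expectation and variance}. Applying the same one-step unroll of $\otrans$ (now \emph{without} assuming $q$ satisfies the inter-option equation) yields the self-referential identity
\[
T(q)(s, o) = \sum_a \pi(a \mid s, o) \sum_{\tilde s, \tilde r} \trans(\tilde s, \tilde r \mid s, a)\bigl[\tilde r - \bar r + \beta(\tilde s, o)\max_{o'} q(\tilde s, o') + (1 - \beta(\tilde s, o))T(q)(\tilde s, o)\bigr].
\]
Subtracting \eqref{eq: intra-option-value optimality equation}, which $q$ satisfies by hypothesis, leaves the homogeneous linear equation $v(s, o) = \sum_a \pi(a \mid s, o) \sum_{\tilde s} \trans(\tilde s \mid s, a)(1 - \beta(\tilde s, o)) v(\tilde s, o)$ for $v \doteq T(q) - q$, with the option index $o$ held fixed throughout.

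The only real obstacle is then to conclude $v \equiv 0$. For each fixed $o$ this is an eigenvector relation $v(\cdot, o) = L_o v(\cdot, o)$, where $L_o$ is a nonnegative matrix on $\sspace$ whose row sums equal the one-step non-termination probabilities under $o$. Iterating, the row sums of $L_o^{\cardS}$ are exactly the $\cardS$-step non-termination probabilities, which by \cref{assu: option assumption} and the finiteness of $\sspace$ are bounded uniformly strictly below $1$. Hence $\|L_o^{\cardS}\|_\infty < 1$, the spectral radius of $L_o$ is strictly less than one, $I - L_o$ is invertible, and $v(\cdot, o) = 0$ for every $o$. This gives $T(q) = q$, completing the equivalence.
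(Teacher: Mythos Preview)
Your proposal is correct, and the forward direction (inter $\Rightarrow$ intra) is argued exactly as in the paper: substitute the one-step recursive definition of $\otrans$ into the right-hand side of \eqref{eq: option-value optimality equation}, split into the termination and continuation branches, re-index the cumulative reward and length in the continuation branch, and then invoke the inter-option equation at $(\tilde s,o)$ to collapse that branch to $q(\tilde s,o)+(\tilde r-\bar r)$.

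For the converse, your route diverges from the paper's. The paper's proof is a single chain of equalities that starts from the inter-option right-hand side and ends at the intra-option right-hand side; the step from the fifth to the sixth line of that chain uses $q(s',o)=\sum_{\tilde s,\tilde r,\tilde l}\otrans(\tilde s,\tilde r,\tilde l\mid s',o)(\tilde r-\tilde l\bar r+\max_{o'}q(\tilde s,o'))$, i.e.\ the inter-option equation at $(s',o)$, and then simply asserts ``vice versa'' at the end. Your argument for this direction is genuinely different and more explicit: you define $T(q)$, obtain the self-referential one-step identity for $T(q)$ that holds for \emph{any} $q$, subtract the intra-option equation satisfied by $q$, and reduce to $(I-L_o)v(\cdot,o)=0$ with $L_o$ the sub-stochastic one-step non-termination kernel. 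Invoking \cref{assu: option assumption} to get $\|L_o^{\cardS}\|_\infty<1$ and hence invertibility of $I-L_o$ is the right tool here; it makes transparent exactly where the finiteness/properness hypothesis enters (without it, a never-terminating option would give $L_o$ a unit spectral radius and the implication could fail). So your treatment of the reverse implication is tighter than the paper's, at the cost of a short extra argument.
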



\section{Convergence Results}
In this section, we present convergence theories of Differential Q-learning and RVI Q-learning in weakly communicating MDPs, and theories of the two option extensions of Differential Q-learning in weakly communicating SMDPs. Empirical results verifying the convergence of the two MDP algorithms are presented in \cref{sec: empirical validation}.

Differential Q-learning updates a table of estimates $Q_t: \sspace \times \aspace \to \bbR$ as follows:
\begin{align}
    Q_{t+1}(S_t, A_t) &\doteq Q_t (S_t, A_t) + \alpha_{\nu(t, S_t, A_t)} \delta_t,  \label{eq: DQL Q update} \\
    Q_{t+1}(s, a) &\doteq Q_t (s, a),\ \forall\ s,a \neq S_t, A_t, \nonumber
\end{align}
where $\nu(t, S_t, A_t)$ is the number of times $S_t, A_t$ has been visited before time step $t$, $\{\alpha_{\nu(t, S_t, A_t)}\}$ is a step-size sequence, and $\delta_t$, the temporal-difference (TD) error, is: 
\begin{align} \label{eq: DQL delta}
    \delta_t \doteq R_{t+1} - \bar R_t + \max_{a} Q_t(S_{t+1}, a) - Q_t (S_t, A_t),
\end{align}
where $\bar R_t$ is a scalar estimate of $r_*$, updated by:
\begin{align}
    \bar R_{t+1} \doteq \bar R_t + \eta \alpha_{\nu(t, S_t, A_t)} \delta_t \label{eq: DQL bar R update},
\end{align}
and $\eta$ is a positive constant.

We now present the convergence theory of Differential Q-learning. We first state the required assumptions, which are also required by the original convergence theory of Differential Q-learning by Wan et al.~(2021a).

\begin{assumption}\label{assu: stepsize} For all $n \geq 0$, $\alpha_n > 0$, $\sum_{n = 0}^\infty \alpha_n = \infty$, and $\sum_{n = 0}^\infty \alpha_n^2 < \infty$.
\end{assumption}
\begin{assumption} \label{assu: asynchronous stepsize 1}
Let $[\cdot]$ denote the integer part of $(\cdot)$, for $x \in (0, 1)$, $\sup_n \frac{\alpha_{[xn]}}{\alpha_n} < \infty$
and $\frac{\sum_{n=0}^{[ym]} \alpha_n}{\sum_{n=0}^m \alpha_n} \to 1$
uniformly in $y \in [x, 1]$.
\end{assumption}
\begin{assumption} \label{assu: asynchronous stepsize 2}
There exists $\Delta > 0$ such that 
\begin{align*}
    \liminf_{n \to \infty} \frac{\nu(n, s, a)}{n+1} \geq \Delta,
\end{align*}
a.s., for all $s \in \sspace, a \in \aspace$.
Furthermore, for all $x > 0$, let $N(n, x) = \min \left \{m > n: \sum_{k = n+1}^m \alpha_k \geq x \right \},$
the limit $\lim_{n \to \infty} \left(\sum_{k = \nu(n, s, a)}^{\nu(N(n, x), s, a)} \alpha_k \right)/ \left (\sum_{k = \nu(n, s', a')}^{\nu(N(n, x), s', a')} \alpha_{k} \right)$
exists a.s. for all $s, s' \in \sspace, a, a' \in \aspace$.
\end{assumption}

\begin{theorem} \label{thm: Differential Q-learning}
If $\mdp$ is communicating and Assumptions~\ref{assu: stepsize}--\ref{assu: asynchronous stepsize 2} hold, the Differential Q-learning algorithm (Equations~\ref{eq: DQL Q update}--\ref{eq: DQL bar R update}) converges, almost surely, $\bar R_t$ to $\optimalr$, $Q_t$ to the set of solutions of \eqref{eq: action-value optimality equation} and
\begin{align}
    \optimalr - \bar R_0 & = \eta \left(\sum_{s, a} q(s, a) - \sum_{s, a} Q_0 (s, a) \right) \label{eq: DQL q solution determination},
\end{align}
and $r(\pi_t, s)$ to $\optimalr$, for all $s \in \sspace$, where $\pi_t$ is any greedy policy w.r.t. $Q_t$.
\end{theorem}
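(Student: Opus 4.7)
The plan is to treat the joint update $(Q_t,\bar R_t)$ as an asynchronous stochastic approximation scheme driven by the local step sizes $\alpha_{\nu(t,s,a)}$, and to invoke Borkar's asynchronous SA convergence theorem (Borkar 2009, Ch.~7) to reduce the almost-sure claim to a deterministic analysis of the mean-field ODE
\begin{align*}
    \dot q(s,a) &= \sum_{s',r} p(s',r\mid s,a)\bigl(r - \bar r + \max_{a'} q(s',a')\bigr) - q(s,a), \\
    \dot{\bar r} &= \eta \sum_{s,a} \dot q(s,a).
\end{align*}
Assumptions~\ref{assu: stepsize}--\ref{assu: asynchronous stepsize 2} are precisely the step-size and asynchronous-clock conditions the theorem requires, so the substantive work reduces to (i) establishing almost-sure boundedness of the iterates via the Borkar--Meyn scaled-limit criterion, and (ii) identifying the internally chain-transitive invariant set of the ODE with the claimed limit set and showing that it is attracting.

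For (i), the scaled limit ODE obtained as $\|(q,\bar r)\|\to\infty$ is the reward-free version of the same equations, and the Borkar--Meyn criterion requires its origin to be the unique globally asymptotically stable equilibrium. Since the update preserves the flow-invariant linear identity $\bar r(t) - \eta\sum_{s,a} q(t,s,a) = \bar r(0) - \eta\sum_{s,a} q(0,s,a)$, the relevant set of equilibria is the intersection of the zero-reward solution set with an affine hyperplane. The key structural fact I would prove is that in a communicating MDP, the zero-reward optimality equation has only the constant solutions $q \equiv c$ with $\bar r = 0$; intersecting with the hyperplane through the origin then leaves only $(q,\bar r)=(0,0)$. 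Global asymptotic stability to this origin follows from the max-norm nonexpansion of the zero-reward Bellman operator combined with the above uniqueness.

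For (ii), the equilibria of the unscaled ODE are the pairs $(q,\bar r)$ with $q$ a solution of~(\ref{eq: action-value optimality equation}), $\bar r = r_*$, and the linear identity~(\ref{eq: DQL q solution determination}) satisfied. I would verify that this equilibrium set is non-empty, closed, bounded, and connected, as the set-valued version of Borkar's theorem requires: non-emptiness and closedness are standard average-reward MDP facts; connectedness follows from the convexity of the solution set of~(\ref{eq: action-value optimality equation}) combined with the affineness of the constraint; and boundedness follows by showing that every admissible direction within the solution manifold produces a nontrivial change in $\sum_{s,a} q(s,a)$, so the scalar constraint cannot be satisfied along any unbounded ray. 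The attractor property then follows from the Bellman-operator nonexpansion paired with the uniqueness established in~(i), so Borkar's theorem delivers $(Q_t,\bar R_t)$ to this set almost surely. The convergence $r(\pi_t,s)\to r_*$ then follows from upper-semicontinuity of the greedy-policy map and the fact, stated earlier in the paper, that every greedy policy with respect to a solution of~(\ref{eq: action-value optimality equation}) is optimal.

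The main obstacle I anticipate is the boundedness-and-attraction analysis in step~(ii). When the MDP is communicating but has multichain optimal policies, the solution set of~(\ref{eq: action-value optimality equation}) can have more than one degree of freedom, and it is not immediately obvious either that the single scalar constraint~(\ref{eq: DQL q solution determination}) cuts this potentially higher-dimensional convex set down to a bounded region, or that the ODE cannot drift along an ``unconstrained'' direction of the solution manifold. Resolving this requires a structural analysis of the optimality equation showing that no non-zero direction within the affine hull of the solution set has zero coordinate-sum, a property I expect to establish via a bounded-harmonic-function argument applied to the recurrent classes of optimal policies under the communicating hypothesis.
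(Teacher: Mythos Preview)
Your overall architecture matches the paper: reduce to asynchronous stochastic approximation via Borkar (2009, Ch.~7), exploit the conserved quantity $\bar r - \eta\sum_{s,a} q(s,a)$ to collapse $(Q,\bar R)$ to a single $Q$-recursion with $f(q)=\bar R_0+\eta\sum_{s,a}(q(s,a)-Q_0(s,a))$, verify the Borkar--Meyn criterion by showing the zero-reward optimality equation in a communicating MDP admits only the constants, and then identify the internally chain-transitive invariant set with the constrained solution set. But step~(ii) contains a genuine gap.

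You write that ``connectedness follows from the convexity of the solution set of~(\ref{eq: action-value optimality equation}) combined with the affineness of the constraint.'' This is false: the paper produces an explicit three-state communicating MDP in which the constrained solution set is non-convex (two line segments meeting at an angle), and the full solution set of~(\ref{eq: action-value optimality equation}) is likewise non-convex in general. The paper's route to connectedness is different: the full state-value solution set $\mathcal{V}$ is connected by Schweitzer--Federgruen (1978, Theorem~4.2), and one then constructs a Lipschitz map $g:\mathcal{V}\to\mathcal{V}_\infty$, $g(v)=v+\bigl((r_*-\tilde f(v))/u\bigr)e$, whose image is exactly the constrained set; continuous images of connected sets are connected.

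Your boundedness argument inherits the same problem. You propose to show that ``no non-zero direction within the affine hull of the solution set has zero coordinate-sum.'' When optimal policies are multichain the solution set has $n_*>1$ degrees of freedom, so its affine hull has dimension at least two and necessarily contains nonzero directions with zero coordinate-sum (subtract the $e$-component from any independent direction). So the scalar constraint alone cannot rule out unbounded rays by linear-algebraic reasoning. The paper instead uses the Schweitzer--Federgruen parametrization of solutions by constants $y_1,\dots,y_{n_*}$ attached to the recurrent classes $R_{*1},\dots,R_{*n_*}$: the communicating hypothesis supplies, for each $\beta$, a policy whose sole recurrent class is $R_{*\beta}$, which forces $y_\alpha \ge c_{\alpha\beta}+y_\beta$ for all $\alpha$. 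Hence one $y_\beta\to+\infty$ drags every $y_\alpha\to+\infty$, contradicting the Lipschitz property of $f$ in the constraint $f(q)=r_*$; the lower bound is symmetric. This is the ``bounded-harmonic-function argument'' you gesture at, but it operates on the nonlinear parametrization of the solution set, not on its affine hull.
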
 

\textbf{Remark:} If the MDP is weakly communicating, that is, it contains transient states, the agent eventually reaches the closed communicating state and never returns to the transient states. Elements in $Q_n$ that are associated with the closed communicating set converge to a set that depends on the values of $Q$ and $\bar R$ when the MDP reaches the closed communicating set for the first time. Other elements in $Q_n$ would only be visited for a finite number of times and can not be guaranteed to converge to their correct values by any learning algorithm. Other conclusions of the theorem remain unchanged. This observation on weakly communicating MDPs also applies to Theorems~\ref{thm: RVI Q learning}--\ref{thm: intra option}.

The update rules of RVI Q-learning are
\begin{align}
    Q_{t+1}(S_t, A_t) & \doteq Q_t(S_t, A_t) + \alpha_{\nu(t, S_t, A_t)} \delta_t(S_t, A_t) \label{eq: RVI Q update},\\
    Q_{t+1}(s, a) &\doteq Q_t (s, a),\ \forall\ s,a \neq S_t, A_t, \nonumber
\end{align}
where
\begin{align}
    \delta_t(S_t, A_t) \doteq R_{t+1} - f(Q_t) + \max_{a} Q_t(S_{t+1}, a) - Q_t(S_t, A_t), \label{eq: RVI delta}
\end{align}
and $f: \sspace \times \aspace \to \bbR$ satisfies the following assumption.
\begin{assumption}\label{assu: f}
1) $f$ is $L$-Lipschitz, 2) there exists a positive scalar $u$ s.t. $f(e) = u$ and $f(x + ce) = f(x) + cu$, and 3) $f(cx) = cf(x), \forall c \in \bbR$.
\end{assumption}

\begin{theorem} \label{thm: RVI Q learning}
If $\mdp$ is communicating and Assumptions~\ref{assu: stepsize}--\ref{assu: f} hold, then the RVI Q-learning algorithm (Equations~\ref{eq: RVI Q update}--\ref{eq: RVI delta}) converges, almost surely, $\bar R_t$ to $\optimalr$, $Q_t$ to the set of solutions of \eqref{eq: action-value optimality equation} and
\begin{align}
    \optimalr & = f(q) \label{eq: RVI q solution determination},
\end{align}
and $r(\pi_t, s)$ to $\optimalr$, for all $s \in \sspace$, where $\pi_t$ is any greedy policy w.r.t. $Q_t$.
\end{theorem}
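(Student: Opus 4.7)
The plan is to apply Borkar's (2009) asynchronous stochastic approximation framework to the ODE
\begin{align*}
\dot q = h(q) \doteq T q - f(q) e - q, \qquad (Tq)(s,a) \doteq \sum_{s', r} p(s', r \mid s, a)\bigl(r + \max_{a'} q(s', a')\bigr),
\end{align*}
naturally associated with the RVI Q-learning updates. Equilibria of this ODE are exactly the $q$ solving \eqref{eq: action-value optimality equation} with $\bar r = f(q)$, so by uniqueness of $\optimalr$ as the reward-rate solution in a weakly communicating MDP, the equilibrium set $K$ equals the solution set of \eqref{eq: action-value optimality equation} intersected with the hyperplane $\{q : f(q) = \optimalr\}$.

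The first step is to show $K$ is non-empty, closed, bounded, and connected. Non-emptiness and closedness follow from standard existence results for weakly communicating MDPs (Puterman 1994, Chapter 9) combined with continuity of $f$. Boundedness follows because the solution set of \eqref{eq: action-value optimality equation} is bounded modulo the subspace spanned by the all-ones vector $e$ (a span-seminorm argument on the closed communicating class), and the constraint $f(q) = \optimalr$ together with $f(q+ce) = f(q)+cu$, $u>0$, selects exactly one representative per coset. For connectedness, one uses the parameterization $q(s,a) = \sum_{s',r} p(s',r\mid s,a)(r - \optimalr + v(s'))$ where $v$ ranges over bounded fixed points of $v(s) = \max_a \sum_{s',r} p(s',r\mid s,a)(r - \optimalr + v(s'))$, and shows this set of $v$ is path-connected by a policy-interpolation argument on the closed communicating class (uniquely extended to transient states by the recursion).

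The next and main obstacle is to show that $0$ is the unique globally asymptotically stable equilibrium of the scaled ODE
\begin{align*}
\dot q = h_\infty(q) \doteq T_\infty q - f(q) e - q, \qquad T_\infty q(s,a) \doteq \sum_{s'} p(s'\mid s,a) \max_{a'} q(s',a'),
\end{align*}
obtained as the limit of $h(cq)/c$ as $c \to \infty$. Any equilibrium $q^*$ satisfies $q^* = T_\infty q^* - f(q^*) e$, which is the average-reward optimality equation with zero rewards and reward rate $f(q^*)$. Since $0$ is the unique reward rate in a weakly communicating MDP with zero rewards, $f(q^*)=0$. Setting $v^*(s) \doteq \max_a q^*(s,a)$ gives $v^*(s) = \max_a \sum_{s'} p(s'\mid s,a) v^*(s')$; on the closed communicating class, any bounded solution of this equation is constant, and the recursion propagates the same constant to every transient state. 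Hence $q^* = ce$ for some $c$, and $f(ce)=cu=0$ with $u>0$ forces $c=0$, so $q^*=0$. Global asymptotic stability of $0$ then follows from a Lyapunov argument using the span-seminorm non-expansiveness of $T_\infty$ together with the stabilizing $-q$ term, adapting the argument of Abounadi et al.\ (2001).

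With these two ingredients, Borkar's asynchronous stochastic approximation theorem (2009, Chapter 7) combined with Assumptions~\ref{assu: stepsize}--\ref{assu: asynchronous stepsize 2} yields $Q_t \to K$ almost surely. Continuity of $f$ then gives $f(Q_t) \to \optimalr$, which delivers the $\bar R_t$ claim. For the policy statement, standard weakly-communicating MDP theory implies that any greedy policy with respect to a $q \in K$ attains $\optimalr$ from every state in the closed communicating class, so $r(\pi_t, s) \to \optimalr$. The scaled-ODE uniqueness step is the primary obstacle, as it is precisely where earlier convergence proofs of RVI Q-learning fail when the $v$-solution set has multiple degrees of freedom; the key insight that overcomes it is that even without a single-degree-of-freedom assumption, a zero-reward weakly communicating MDP forces a constant $v^*$, which the homogeneity of $f$ then pins down to zero.
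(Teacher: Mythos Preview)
Your overall architecture matches the paper's: cast RVI Q-learning as a Borkar-style asynchronous stochastic approximation tracking the ODE $\dot q = Tq - f(q)e - q$, show the equilibrium set $K$ is non-empty, closed, bounded, and connected (the paper's Theorem~\ref{thm: characterize Q}), show $0$ is the unique equilibrium of the scaled zero-reward ODE (the paper's Lemma~\ref{lemma: 0 reward MDP has unique solution}), and then invoke Borkar (2009) together with a separate argument for the greedy-policy reward rate. In particular, your identification of the zero-reward uniqueness step as the crux, and your argument that $v^*$ must be constant on the closed communicating class and hence $q^*=0$, is exactly the content of the paper's Lemma~\ref{lemma: 0 reward MDP has unique solution}.

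Three places where your details diverge from or fall short of the paper:

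\textbf{Boundedness.} A ``span-seminorm argument on the closed communicating class'' is not enough: span-nonexpansion of the Bellman operator gives only $\|v_1-v_2\|_{\mathrm{span}}\le\|v_1-v_2\|_{\mathrm{span}}$ for two fixed points, which is vacuous. The paper instead imports Schweitzer \& Federgruen's (1978) structural characterization (their Theorems~4.1(c) and~5.1, restated as Lemmas~\ref{lemma: s&f 1978 Theorem 4.1 c}--\ref{lemma: s&f 1978 5.1 d}): the $v$-solution set has $n_*$ free parameters $y_1,\dots,y_{n_*}$ indexed by the minimal recurrent classes common to all optimal policies, and for each pair $\alpha,\beta$ the weakly-communicating structure furnishes an optimal policy with $R_{*\beta}$ as its sole recurrent class, yielding inequalities $y_\alpha \ge \text{const}_{\alpha\beta} + y_\beta$. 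These force all $y_\alpha$ to blow up or down together, which the affine constraint $f(q)=r_*$ forbids.

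\textbf{Connectedness.} The paper does not interpolate policies. It cites S\&F's result that the unconstrained $v$-solution set $\mathcal V$ is connected, and then exhibits a Lipschitz surjection $g:\mathcal V\to\mathcal V_\infty$, $g(v)=v+u^{-1}(r_*-\tilde f(v))\,e$, that picks the unique $e$-shift landing on the $f$-hyperplane; the continuous image of a connected set is connected. Your policy-interpolation sketch is a different route and would need real work to make rigorous (interpolating between optimal policies need not keep you inside the solution set).

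\textbf{Greedy-policy convergence.} The deduction ``every greedy policy w.r.t.\ $q\in K$ is optimal, so $r(\pi_t,s)\to r_*$'' has a gap: $\pi_t$ is greedy with respect to $Q_t$, which is only \emph{near} $K$, and the greedy correspondence is not continuous. The paper closes this with a direct sandwich: for any $Q$, the reward rate of any greedy policy w.r.t.\ $Q$ lies between $\min_{s,a}(TQ-Q)(s,a)$ and $\max_{s,a}(TQ-Q)(s,a)$, and since $Tq-q=r_*e$ for $q\in K$ and $q\mapsto\mathrm{sp}(Tq-q)$ is continuous, $\mathrm{sp}(TQ_t-Q_t)\to 0$.
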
 

Now consider option extensions of Differential Q-learning. Given an SMDP $\smdp = (\sspace, \ospace, \lspace, \orspace, \otrans)$, inter-option Differential Q-learning maintains estimates of option values, and, inspired by Schweitzer (1971), updates estimates using scaled TD errors:
\begin{align}
    Q_{n+1}(\hat S_n, \hat O_n) & \doteq Q_{n}(\hat S_n, \hat O_n) + \alpha_{\nu(n, \hat S_n, \hat O_n)} \delta_{n} / L_n(\hat S_n, \hat O_n), \label{eq: Inter-option Differential TD-learning Q}\\
    Q_{n+1}(s, o) &\doteq Q_n (s, o),\ \forall\ s,o \neq \hat S_n, \hat O_n, \nonumber\\
    \bar R_{n+1} & \doteq \bar R_{n} + \eta \alpha_{\nu(n, \hat S_n, \hat O_n)} \delta_{n}/L_n(\hat S_n, \hat O_n), \label{eq: Inter-option Differential TD-learning R bar}
\end{align}
where $\nu(n, \hat S_n, \hat O_n)$ is the number of visits to state-option pair $(\hat S_n, \hat O_n)$ before stage $n$, $L_n(\cdot,\cdot)$ comes from an additional vector of estimates $L: \sspace \times \ospace \to \bbR$ that approximate the expected lengths of state-option pairs, updated by:
\begin{align}
    L_{n+1}(\hat S_n, \hat O_n) \doteq L_{n}(\hat S_n, \hat O_n) + \beta_{\nu(n, \hat S_n, \hat O_n)} (\hat L_n - L_{n}(\hat S_n, \hat O_n)), \label{eq: Inter-option Differential TD-learning L}
\end{align}
where $\{\beta_n\}$ is the other step-size sequence. 
The TD-error $\delta_n$ in \eqref{eq: Inter-option Differential TD-learning Q} and \eqref{eq: Inter-option Differential TD-learning R bar} is
\begin{align} \label{eq: Inter-option Differential Q-learning TD error}
    \delta_n & \doteq \hat R_n - L_n(\hat S_n, \hat O_n) \bar R_n + \max_o Q_n (\hat S_{n+1}, o) - Q_n(\hat S_{n}, \hat O_n),
\end{align}
\begin{theorem}\label{thm: inter option}
If $\smdp$ is communicating, Assumptions~\ref{assu: option assumption}--\ref{assu: asynchronous stepsize 2} hold, except for using $\nu(n, s, o)$ instead of $\nu(t, s, a)$, and that $0 \leq \beta_n \leq 1$, $\sum_n \beta_n = \infty$, and $\sum_n \beta_n^2 < \infty$, inter-option Differential Q-learning (Equations~\ref{eq: Inter-option Differential TD-learning Q}-\ref{eq: Inter-option Differential Q-learning TD error}) converges, almost surely,
$Q_n$ to the set of solutions of \eqref{eq: option-value optimality equation} and
\begin{align}
    \ooptimalr - \bar R_0 & = \eta \left(\sum_{s, o} q(s, o) - \sum_{s, o} Q_0(s, o) \right) \label{eq: option DQL q solution determination},
\end{align}
$\bar R_n$ to $\ooptimalr$, and $\optionr(\opi_n, s)$ to $\ooptimalr$ where $\opi_n$ is a greedy policy w.r.t. $Q_n$.
\end{theorem}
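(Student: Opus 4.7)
The overall plan is a two-stage reduction: first show that the auxiliary length estimates $L_n$ converge to the true expected option lengths, and then argue that once $L_n$ is asymptotically pinned, the joint iteration for $(Q_n, \bar R_n)$ has the same asymptotic behavior as Differential Q-learning applied to the underlying SMDP, so the argument already used for \cref{thm: Differential Q-learning} can be lifted to \cref{thm: inter option}.

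First I would show that $L_n(s, o) \to \bar L(s, o) \doteq \sum_{s', r, l} l \cdot \otrans(s', r, l \mid s, o)$ almost surely for every $(s, o)$. Crucially, the recursion in~\eqref{eq: Inter-option Differential TD-learning L} does not depend on $Q_n$ or $\bar R_n$, so no two-timescale argument is required: it is a standard Robbins-Monro update whose martingale-difference noise $\hat L_n - \bar L(\hat S_n, \hat O_n)$ has finite variance by \cref{prop: finite expectation and variance}, whose target $\bar L$ is bounded, and whose state-option pairs are visited infinitely often at comparable rates by \cref{assu: asynchronous stepsize 2}. Since $\bar L(s, o) \geq 1$, the reciprocal $1/L_n$ is eventually bounded above and away from zero, so it can be absorbed into the effective step size of the $(Q_n, \bar R_n)$ update.

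Second, with $L_n$ asymptotically equal to $\bar L$, the joint $(Q_n, \bar R_n)$ iteration becomes an asynchronous stochastic approximation whose limit ODE, in Borkar's (2009, Chapter~7) framework, takes the form
\begin{align*}
    \dot Q(s, o) &= \frac{c(s, o)}{\bar L(s, o)}\Big[\sum_{s', r, l} \otrans(s', r, l \mid s, o)\big(r - l \bar R + \max_{o'} Q(s', o')\big) - Q(s, o)\Big], \\
    \dot{\bar R} &= \eta \sum_{s, o} \dot Q(s, o),
\end{align*}
where $c(s, o) > 0$ captures the relative visitation frequencies guaranteed by \cref{assu: asynchronous stepsize 2}. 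Since $\bar L$ and $c$ are strictly positive, the equilibria of this ODE coincide with pairs $(q, \ooptimalr)$ satisfying \eqref{eq: option-value optimality equation}. Telescoping the discrete $Q$-update against the $\bar R$-update along the trajectory further yields the conservation law $\eta \sum_{s, o} Q_n(s, o) - \bar R_n \equiv \eta \sum_{s, o} Q_0(s, o) - \bar R_0$, which gives the affine constraint \eqref{eq: option DQL q solution determination} in the limit.

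Third, I would conclude convergence to the equilibrium set by replaying, in the SMDP setting, the two structural steps identified in \cref{sec: Convergence Results} for \cref{thm: Differential Q-learning}: (i) the solution set of \eqref{eq: option-value optimality equation} intersected with the affine constraint is non-empty, closed, bounded, and connected when $\smdp$ is (weakly) communicating; and (ii) when all cumulative option-rewards are identically zero, $\bar r = 0$ is the unique scalar component of any solution of \eqref{eq: option-value optimality equation}. These two facts, together with Borkar's asynchronous stochastic approximation theorem and a Borkar-Meyn stability argument leveraging (ii), yield almost-sure boundedness and convergence of $(Q_n, \bar R_n)$ to the claimed set; the convergence $\optionr(\opi_n, s) \to \ooptimalr$ then follows because any greedy policy with respect to a solution of \eqref{eq: option-value optimality equation} is optimal.

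The main obstacle I expect is verifying (i) and (ii) in the weakly communicating SMDP setting, where the option-length $l$ appears explicitly in the optimality equation and $q$ may carry more than one degree of freedom. For (i), the natural route is to invoke Puterman's (1994) characterization of optimal value functions for weakly communicating SMDPs, optionally using \cref{prop: inter = intra equations} to pass to the intra-option form when that is more convenient. For (ii), one sums the zero-reward optimality equation $\sum_{s', l} \bigl(\sum_r \otrans(s', r, l \mid s, o)\bigr)(-l \bar r) + \max_{o'} q(s', o') - q(s, o) = 0$ against the stationary distribution of any greedy policy on each of its recurrent classes; since the expected length $\bar L$ is strictly positive, the weighted equation forces $\bar r = 0$. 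These two SMDP analogues are the essential technical steps; the remainder of the argument is a routine adaptation of the \cref{thm: Differential Q-learning} proof.
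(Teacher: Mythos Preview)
Your high-level route matches the paper's: cast the $(Q_n,\bar R_n)$ iteration, once $L_n$ has converged, as an asynchronous stochastic approximation in Borkar's framework, and then verify the two structural ingredients---your (i) is exactly the paper's \cref{thm: characterize Q}, and a correct version of (ii) is the paper's \cref{lemma: 0 reward MDP has unique solution}. The exact conservation law you write down and the handling of $L_n$ are also how the paper (via the General RVI Q template of Wan et al.\ 2021b) absorbs this algorithm into the common framework.

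The gap is in (ii). As you state it, and as your sketch proves it, (ii) only says that $\bar r = 0$ in the zero-reward SMDP, which is trivial. What the Borkar--Meyn stability step actually needs is that the \emph{vector} $q=0$ is the unique equilibrium of the scaled ODE $h_\infty$, i.e., the unique solution of the zero-reward optimality equation together with the affine constraint. After substituting $\bar r = 0$, the equation $q(s,o) = \sum_{s'} \tilde P(s'\mid s,o)\max_{o'} q(s',o')$ still admits at least the entire line $\{ce : c\in\bbR\}$, and in a weakly communicating SMDP you must first rule out any additional degrees of freedom before the constraint $\sum_{s,o} q(s,o)=0$ can force $c=0$. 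Your stationary-distribution averaging only kills the scalar $\bar r$; it says nothing about the shape of the $q$-solution set. The paper's \cref{lemma: 0 reward MDP has unique solution} closes this by using the communicating structure (via \cref{lemma: s&f 1978 Theorem 4.1 c}) to show that every zero-reward state-value solution is constant on the closed communicating class, so the $q$-solution set is exactly $\{ce\}$, after which the constraint pins $c=0$.

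A smaller point: the final claim $\optionr(\opi_n,s)\to\ooptimalr$ does not follow merely from ``greedy with respect to a solution of \eqref{eq: option-value optimality equation} is optimal,'' because $Q_n$ converges to a \emph{set} rather than a single point, and a greedy policy for $Q_n$ need not be greedy for any fixed element of that set. The paper handles this with a separate span-seminorm argument in \cref{sec: convergence of r(pi_t)}.
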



Intra-option Differential Q-learning also maintains estimates of option values. However, instead of updating the estimates using option transitions, it updates for all options using each action transition $(S_t, O_t, A_t, R_{t+1}, S_{t+1})$.
\begin{align}
    Q_{t+1}(S_t, o) &\doteq Q_{t}(S_t, o) + \alpha_{\nu(t, S_t, o)} \rho_t(o) \delta_t(o), \quad \forall\ o \in \ospace, \label{eq: Intra-option Differential TD-learning Q}\\
    Q_{t+1}(s, o) &\doteq Q_{t}(s, o), \quad \forall\ s \in \sspace, o \in \ospace, \nonumber \\
    \bar R_{t+1} &\doteq \bar R_t + \eta \sum_{o \in \ospace} \alpha_{\nu(t, S_t, o)} \rho_t(o) \delta_t(o), \label{eq: Intra-option Differential TD-learning R bar}
\end{align}
where $\{\alpha_t\}$ is a step-size sequence, $\rho_t(o) \doteq \frac{\pi(A_t | S_t, o)}{\pi(A_t | S_t, O_t)}$ is the importance sampling ratio, and:
\begin{align}
\label{eq: Intra-option Differential Q-learning TD error}
    \delta_t(o) \doteq R_{t+1} - \bar R_t + u^{Q_t}_*(S_{t+1}, o) - Q_t(S_t, o),
\end{align}
where $u_*^{Q_t}$ is defined in \eqref{eq: u_*}.

\begin{theorem}\label{thm: intra option}
If $\smdp$ is communicating, Assumptions~\ref{assu: option assumption}--\ref{assu: asynchronous stepsize 2} hold, except for using $\nu(t, s, o)$ instead of $\nu(t, s, a)$, intra-option Differential Q-learning (Equations~\ref{eq: Intra-option Differential TD-learning Q}-\ref{eq: Intra-option Differential Q-learning TD error}) converges, almost surely,
$Q_t$ to the set of solutions of \eqref{eq: option-value optimality equation} and \eqref{eq: option DQL q solution determination},
$\bar R_t$ to $\ooptimalr$, and $\optionr(\opi_t, s)$ to $\ooptimalr$ where $\opi_t$ is a greedy policy w.r.t. $Q_t$.
\end{theorem}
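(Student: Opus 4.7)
The plan is to parallel the proof of Theorem~\ref{thm: inter option}, using Proposition~\ref{prop: inter = intra equations} to identify the two algorithms' target sets, and then to invoke Borkar's asynchronous stochastic-approximation framework with the structural lemmas already developed for the inter-option case.

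First, I would identify the mean-field ODE associated with the updates (\ref{eq: Intra-option Differential TD-learning Q})--(\ref{eq: Intra-option Differential Q-learning TD error}). Conditioning on $S_t = s$ and $O_t$, the importance-sampling identity
\begin{align*}
\E\!\left[\rho_t(o)\, g(A_t, R_{t+1}, S_{t+1}) \,\middle|\, S_t, O_t\right] = \sum_{a} \pi(a \mid S_t, o)\, \E\!\left[g(A_t, R_{t+1}, S_{t+1}) \mid S_t, A_t = a\right]
\end{align*}
shows that the expected drift of the $Q(s, o)$ update equals $T^o(Q, \bar R)(s, o) - Q(s, o)$, where $T^o$ is the right-hand side of (\ref{eq: intra-option-value optimality equation}); the corresponding $\bar R$ drift is $\eta$ times the sum of these residuals over $o$. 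So the limit ODE has the same abstract form as the one analyzed for Theorem~\ref{thm: inter option}, but with the intra-option Bellman operator in place of the SMDP option operator.

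Next, I invoke Proposition~\ref{prop: inter = intra equations}: the equilibria of this ODE are precisely the pairs $(q, \ooptimalr)$ with $q$ solving (\ref{eq: option-value optimality equation}). I can then reuse the structural lemmas underlying Theorems~\ref{thm: Differential Q-learning}--\ref{thm: inter option}, namely that the solution set of (\ref{eq: option-value optimality equation}) is non-empty, closed, bounded, and connected, and that $0$ is the unique solution in the zero-reward instance. The linear invariant (\ref{eq: option DQL q solution determination}) is preserved by the flow because the aggregate change in $\bar R$ is $\eta$ times the aggregate change in $\sum_{s,o} Q(s,o)$, so the limit set for $(Q, \bar R)$ is a single bounded, connected sheet of the solution manifold, exactly as in the inter-option case.

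The main technical work will be verifying the hypotheses of Borkar's asynchronous stochastic-approximation theorem: (i) that the martingale-difference noise $\rho_t(o)\delta_t(o) - \E[\rho_t(o)\delta_t(o)\mid\mathcal{F}_t]$ is square-integrable, which follows from positivity of the behavior policy on the executed options together with Proposition~\ref{prop: finite expectation and variance}; (ii) stability of the iterates, obtained by a Borkar--Meyn-type scaling argument in which the zero-reward uniqueness of (\ref{eq: option-value optimality equation}) furnishes the origin as the unique globally asymptotically stable equilibrium of the rescaled ODE; and (iii) the relative step-size conditions for asynchronous updates, which follow from Assumption~\ref{assu: asynchronous stepsize 2} with $\nu(t, s, o)$ replacing $\nu(t, s, a)$, noting that every option component at $S_t$ is updated at each visit. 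I expect step (ii) to be the main obstacle, because the intra-option update simultaneously modifies $Q(S_t, o)$ for \emph{all} $o \in \ospace$ and the importance ratios $\rho_t(o)$ need not be uniformly bounded, so controlling the joint scaling across option components and extracting the origin as the unique equilibrium of the scaled ODE will require more delicate bookkeeping than in the inter-option case. Once (i)--(iii) are in place, almost-sure convergence of $Q_t$ to the solution set of (\ref{eq: option-value optimality equation}) pinned by (\ref{eq: option DQL q solution determination}), of $\bar R_t$ to $\ooptimalr$, and of $\optionr(\opi_t, s)$ to $\ooptimalr$ for any greedy $\opi_t$, follows by the same greedy-policy arguments used in weakly communicating SMDPs for Theorem~\ref{thm: inter option}.
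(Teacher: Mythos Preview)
Your proposal is correct and matches the paper's approach: the paper routes everything through a common ``General RVI Q'' abstraction (from Wan et al.\ 2021b) and then verifies its new Assumptions~\ref{assu: unique solution r}--\ref{assu: solution 0 reward} for the intra-option case via Proposition~\ref{prop: inter = intra equations}, Theorem~\ref{thm: characterize Q}, and Lemma~\ref{lemma: 0 reward MDP has unique solution}, which is exactly the ODE/Borkar--Meyn/asynchronous-SA program you outline. Your direct description (mean-field ODE, equilibrium identification via Proposition~\ref{prop: inter = intra equations}, stability from the zero-reward lemma, the $\bar R$--$\sum Q$ invariant fixing the sheet \eqref{eq: option DQL q solution determination}, and the greedy-policy argument) is the unpacked version of the same proof.
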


\section{Characterization of the Solution Set}

In this section, we characterize the sets that the algorithms described in the previous section converge to. This section plays a key role in showing their convergence.

We consider the set of solutions of $q$ in the SMDP optimality equation (\eqref{eq: option-value optimality equation}) and 
\begin{align}
    \ooptimalr & = f(q) \label{eq: q solution determination},
\end{align}
where $f: \sspace \times \ospace \to \bbR$ satisfies \cref{assu: f}.
It is clear that \eqref{eq: option-value optimality equation} generalizes over \eqref{eq: action-value optimality equation} and \eqref{eq: q solution determination} generalizes over \eqref{eq: DQL q solution determination}, \eqref{eq: RVI q solution determination}, and \eqref{eq: option DQL q solution determination}. And thus the characterization of $\GDiffQsolutionq$ applies to the sets that action/option values in the aforementioned algorithms are claimed to converge to in Theorems\ \ref{thm: Differential Q-learning}--\ref{thm: intra option}.

It is known that if the SMDP is weakly communicating, \eqref{eq: option-value optimality equation} has $\ooptimalr$ as its unique solution of $\bar r$. For $q$, it has been shown by Schweitzer \& Federgruen (1978) (we will refer to this work multiple times and thus we use a shorthand ``S\&F'' for simplicity from now on) in their Theorem 4.2 that the set of solutions of $q$ in \eqref{eq: option-value optimality equation} is closed, unbounded, connected, and possibly non-convex. The next theorem characterizes $\GDiffQsolutionq$.
\begin{theorem} \label{thm: characterize Q}
If the SMDP is weakly communicating and \cref{assu: f} holds, $\GDiffQsolutionq$ is non-empty, closed, bounded, connected, and possibly non-convex.
\end{theorem}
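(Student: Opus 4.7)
Write $\mathcal{Q}$ for the set of $q$ satisfying \eqref{eq: option-value optimality equation}, so that $\GDiffQsolutionq = \mathcal{Q} \cap f^{-1}(\ooptimalr)$. The cited Schweitzer \& Federgruen Theorem~4.2 already tells me $\mathcal{Q}$ is non-empty, closed, connected, and invariant under the shift $q \mapsto q + c e$ for every $c \in \bbR$. Non-emptiness and closedness of $\GDiffQsolutionq$ follow quickly: for any $q_0 \in \mathcal{Q}$, the choice $c \doteq (\ooptimalr - f(q_0))/u$ (well-defined since $u > 0$) gives $q_0 + c e \in \GDiffQsolutionq$ via \cref{assu: f}, and $\GDiffQsolutionq$ is the intersection of two closed sets because $f$ is continuous.

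For connectedness, I would introduce the retraction $\phi : \mathcal{Q} \to \GDiffQsolutionq$ defined by $\phi(q) \doteq q + \tfrac{\ooptimalr - f(q)}{u}\, e$. Shift invariance of $\mathcal{Q}$ together with $f(q + c e) = f(q) + c u$ show that $\phi(q) \in \GDiffQsolutionq$; $\phi$ is continuous because $f$ is Lipschitz and it restricts to the identity on $\GDiffQsolutionq$, so $\GDiffQsolutionq$ is the continuous image of the connected set $\mathcal{Q}$, hence connected.

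The main obstacle is boundedness, which I plan to handle by a homogenization argument. Suppose $\GDiffQsolutionq$ is unbounded; pick $q_n \in \GDiffQsolutionq$ with $\norm{q_n} \to \infty$, and extract a subsequence along which $\hat q_n \doteq q_n / \norm{q_n} \to \hat q$ with $\norm{\hat q} = 1$. Dividing \eqref{eq: option-value optimality equation} by $\norm{q_n}$ and using \cref{prop: finite expectation and variance} to bound $\sum_{s', r, l} \otrans(s', r, l \mid s, o)(r - \ooptimalr\, l)$ uniformly, the inhomogeneous term vanishes in the limit, leaving $\hat q$ as a solution of the SMDP optimality equation with all rewards and $\bar r$ replaced by zero. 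Positive homogeneity of $f$ (\cref{assu: f}) yields $f(\hat q) = \lim f(q_n) / \norm{q_n} = 0$. I would then invoke, as a separate lemma (the second key step announced in the introduction), the claim that in a weakly communicating SMDP the only solutions of the zero-reward optimality equation are the constants $c e$; combined with $f(\hat q) = 0$ and $u > 0$, this forces $\hat q = 0$, contradicting $\norm{\hat q} = 1$.

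The supporting lemma is where weak communicability is used essentially. My plan is to take any greedy meta-policy $\opi$ w.r.t.\ $\hat q$, note that $\hat q_{\opi}(s) \doteq \hat q(s, \opi(s))$ is harmonic for the induced chain $P_{\opi}$, and compare with the inequality $\hat q_{\opi}(s) \ge \mathbb{E}[\hat q_{\opi}(s') \mid s, \sigma(s)]$ coming from greediness, where $\sigma$ is any meta-policy witnessing weak communicability on the closed class $C$. Averaging against $\sigma$'s stationary distribution forces equality on $C$, irreducibility makes $\hat q_{\opi}$ constant on $C$, and transient states inherit that same constant because they reach $C$ almost surely under $\opi$. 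Substituting back into the zero-reward optimality equation then yields $\hat q \equiv c e$, proving the lemma. Finally, the ``possibly non-convex'' clause is witnessed by the bottom MDP in \cref{fig:different mdps}: both of the listed solutions lie in $\GDiffQsolutionq$ for a natural choice of $f$, yet their midpoint fails the $\max$-form optimality equation.
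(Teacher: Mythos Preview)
Your argument is correct, and for non-emptiness, closedness, and connectedness it matches the paper's proof almost verbatim (the paper works in places with state values $v(s)=\max_o q(s,o)$ rather than $q$, but that is cosmetic; the retraction you call $\phi$ is exactly the paper's map $g$).

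The boundedness argument, however, is genuinely different from the paper's and in several respects cleaner. The paper does \emph{not} use a homogenization/scaling argument. Instead it invokes two structural results of Schweitzer \& Federgruen (their Theorems~4.1(c) and~5.1, restated as Lemmas~\ref{lemma: s&f 1978 Theorem 4.1 c} and~\ref{lemma: s&f 1978 5.1 d}) describing the solution set of \eqref{eq: state value optimality equation} as having exactly $n_*$ degrees of freedom $y_1,\dots,y_{n_*}$, one per minimal recurrent class. The weakly-communicating hypothesis is then used to construct, for each class $R_{*\beta}$, a meta-policy under which $R_{*\beta}$ is the \emph{only} recurrent class; this yields inequalities $y_\alpha \ge c_{\alpha\beta} + y_\beta$ linking all the free coordinates, and an unbounded coordinate would force all of them to be unbounded, contradicting the Lipschitz property of $f$. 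Your route bypasses this structure theory entirely: you scale, pass to the limit, and land directly in the zero-reward problem, where you invoke the uniqueness lemma. The paper proves that same zero-reward lemma (as Lemma~\ref{lemma: 0 reward MDP has unique solution}), but uses it only later for the stability part of the convergence proof, not for boundedness here. So your approach unifies the two ``key steps'' announced in the introduction, whereas the paper keeps them decoupled and pays the price of importing the full S\&F degrees-of-freedom machinery. What the paper's route buys is explicit structural information about $\calV_\infty$; what yours buys is a shorter, more self-contained argument that needs only the easy half of S\&F (connectedness and closedness of $\mathcal{Q}$) together with the zero-reward lemma and the homogeneity clause $f(cx)=cf(x)$ from \cref{assu: f}.

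One small remark: your non-convexity witness via \cref{fig:different mdps} is asserted without checking that the midpoint actually fails \eqref{eq: action-value optimality equation} or that a single $f$ places both listed solutions in $\GDiffQsolutionq$; the paper instead works out a three-state example in full detail (\cref{sec: non-convexity}). This is minor, since non-convexity is only the ``possibly'' clause.
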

Before presenting the proof, first note that our convergence proof does not rely on the convexity property and we defer the proof of non-convexity to \cref{sec: non-convexity}.
\begin{proof}

First, $\GDiffQsolutionq$ is non-empty. To see this point, note that for any solution of $q$ in \eqref{eq: option-value optimality equation}, $q_*$, $q_* + ce$ is also a solution for any $c \in \bbR$ and thus there must be a $c$ such that \eqref{eq: q solution determination} holds because $f(x + ce) = f(x) + cu$ for any $x, c$.  

$\GDiffQsolutionq$ is closed because the set of solutions of $q$ in \eqref{eq: option-value optimality equation} is closed by S\&F, the set of solutions of $q$ in \eqref{eq: q solution determination} is closed because $f$ is Lipschitz  and is thus continuous, and the intersection of two closed sets is closed.

\textbf{Boundedness}

We now show that $\GDiffQsolutionq$ is bounded. For any $q \in \GDiffQsolutionq$, let $v(s) \doteq \max_o q(s, o)$. Rewrite the option-value optimality equation (\cref{eq: option-value optimality equation}) using $v$ instead of $q$, we have,
\begin{align}
    v(s) &= \max_o \sum_{s', r, l} \otrans(s', r, l \mid s, o) (r - l \ooptimalr + v(s')), \forall\ s \in \sspace \label{eq: state value optimality equation}.
\end{align}
The above equation is known as the state-value optimality equation.

It is easy to verify that
\begin{align} \label{eq: linear transformation q and v}
    q(s, o) = \sum_{s', r, l} \otrans(s', r, l \mid s, o) (r - l \ooptimalr + v(s')).
\end{align}
Using this fact, rewrite \cref{eq: q solution determination} using $v$ instead of $q$, we have,
\begin{align}
    \ooptimalr & = f(\tilde r + \tilde Pv) \label{eq: v solution determination},
\end{align}
where
\begin{align*}
    \tilde r(s, o) & \doteq \sum_{s', r, l} \otrans(s', r, l \mid s, o) (r - l \ooptimalr ),\\
    \tilde P(s, o, s') & \doteq \sum_{r, l} \otrans(s', r, l \mid s, o).
\end{align*}
Denote the set of solutions of $v$ in \eqref{eq: state value optimality equation} by $\calV$. Denote the set of solutions of $v$ in \cref{eq: state value optimality equation} and \cref{eq: v solution determination} by $\calV_\infty$. If $\calV_\infty$ is bounded, $\GDiffQsolutionq$ is also bounded because any $q \in \GDiffQsolutionq$ can be obtained from a solution of $v \in \calV_\infty$ with a linear operation in view of \cref{eq: linear transformation q and v}.

In order to show boundedness, We will need the following two lemmas to proceed. These two lemmas are similar to Theorem 4.1 (c) and Theorem 5.1 in S\&F, except that 1) the '$\max$' operates over the set of all optimal policies, instead of the set of all deterministic optimal policies as in Theorem 4.1 (c), and 2) we consider the set of weakly communicating SMDPs while S\&F considers general multi-chain SMDPs. The proofs are also essentially the same. For completeness, we provide the proofs for these two lemmas in Sections~\ref{proof of lemma: s&f 1978 Theorem 4.1 c}, \ref{proof of lemma: s&f 1978 5.1 d}. 

To formally state the \cref{lemma: s&f 1978 Theorem 4.1 c}, we will first introduce some definitions.

For any $\opi \in \osetpolicies$, let $\otransmatrix$ denote the $\cardS \times \cardS$ transition probability matrix under policy $\opi$. That is,
\begin{align} \label{eq: P_pi definition}
    \otransmatrix(s, s') \doteq \sum_{o, r, l} \opi(o \mid s) \otrans(s', r, l \mid s, o).
\end{align}

Let $\olimitingmatrix$ be the \emph{limiting matrix} of $\otransmatrix$, which is the Cesaro limit of the sequence $\{\otransmatrix^i\}_{i = 1}^\infty$:
\begin{align}\label{eq: limiting matrix}
    \olimitingmatrix \doteq \lim_{n \to \infty} \frac{1}{n} \sum_{i = 0}^{n-1} \otransmatrix^i.
\end{align}
Because $\sspace$ is finite, the Cesaro limit exists and $\olimitingmatrix$ is a stochastic matrix (has row sums equal to 1). 

Let $\oonestepl(s) \doteq \sum_{o, s', r, l} \opi(o \mid s) \otrans(s', r, l \mid s, o) l$. And let the fundamental matrix $\ofundmatrix \doteq (I - \olimitingmatrix + \olimitingmatrix)^{-1} = I + \lim_{\gamma \uparrow 1} \sum_{n=1}^\infty \gamma^n (\otransmatrix^n - \olimitingmatrix)$. 

\begin{lemma}\label{lemma: s&f 1978 Theorem 4.1 c}
If the SMDP is weakly communicating, $v$ is a solution of \eqref{eq: state value optimality equation} if and only if 
\begin{align}
    v(s) = \max_{\opi \in \osetoptimalpolicies} [\ofundmatrix (\oonestepr - \oonestepl \ooptimalr) + \olimitingmatrix v](s), \forall\ s \in \sspace \label{eq: lemma 1 eq}.
\end{align}
\end{lemma}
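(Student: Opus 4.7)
The plan is to prove the two directions of the iff using three key algebraic facts: $\ofundmatrix(I - \otransmatrix) = (I - \otransmatrix)\ofundmatrix = I - \olimitingmatrix$, $\ofundmatrix\olimitingmatrix = \olimitingmatrix\ofundmatrix = \olimitingmatrix$, and $\olimitingmatrix(\oonestepr - \oonestepl\,\ooptimalr) = 0$ whenever $\opi \in \osetoptimalpolicies$ (the last is equivalent to the gain of $\opi$ being $\ooptimalr$, which is precisely what optimality in a weakly communicating SMDP means). All three identities will be used with $\opi$ specialized to the policy under consideration.

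For the forward direction, suppose $v \in \calV$. Any policy $\opi^*$ that is greedy with respect to $v$ in \cref{eq: state value optimality equation} is optimal and, specializing $\opi = \opi^*$, satisfies $(I - \otransmatrix) v = \oonestepr - \oonestepl\,\ooptimalr$; premultiplying by $\ofundmatrix$ and using the first two identities gives $v = \ofundmatrix(\oonestepr - \oonestepl\,\ooptimalr) + \olimitingmatrix v$, so the maximum in \cref{eq: lemma 1 eq} is at least $v$. For the matching upper bound over an arbitrary $\opi \in \osetoptimalpolicies$, I would start from the Bellman inequality $v \geq \oonestepr - \oonestepl\,\ooptimalr + \otransmatrix v$ (valid because $v \in \calV$), decompose $v = \olimitingmatrix v + w$ with $\olimitingmatrix w = 0$, and reduce the inequality to $(I - \otransmatrix) w \geq \oonestepr - \oonestepl\,\ooptimalr$ with both sides lying in the invariant subspace $\{x : \olimitingmatrix x = 0\}$. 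On that subspace the Abelian-limit representation $\ofundmatrix = I + \lim_{\gamma \uparrow 1}\sum_{n \geq 1}\gamma^n(\otransmatrix^n - \olimitingmatrix)$ acts monotonically, so the inequality passes through to give $w \geq \ofundmatrix(\oonestepr - \oonestepl\,\ooptimalr)$, and adding $\olimitingmatrix v$ closes the bound.

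For the backward direction, suppose $v$ satisfies \cref{eq: lemma 1 eq}; I will show $Tv = v$, where $T$ denotes the optimality operator in \cref{eq: state value optimality equation}. For each $s$, fix a maximizer $\opi^*_s \in \osetoptimalpolicies$; rearranging the equality at $s$ together with the pointwise inequality $v \geq \ofundmatrix(\oonestepr - \oonestepl\,\ooptimalr) + \olimitingmatrix v$ elsewhere, and invoking the key identities specialized to $\opi^*_s$, yields $(T_{\opi^*_s} v)(s) \geq v(s)$, and hence $(Tv)(s) \geq v(s)$. The reverse inequality $Tv \leq v$ is a standard optimality argument: if $(Tv)(s^\dagger) > v(s^\dagger)$ at some $s^\dagger$, iterating the policy that plays a strictly improving option at $s^\dagger$ and follows $\opi^*_{s^\dagger}$ elsewhere, then taking Cesaro averages, would produce a policy with gain strictly greater than $\ooptimalr$ at $s^\dagger$, contradicting the definition of $\ooptimalr$.

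The main technical obstacle is the forward-direction upper bound, since $\ofundmatrix$ is generally not componentwise non-negative and does not preserve componentwise inequalities. The trick is to restrict the argument to the invariant subspace on which $\olimitingmatrix$ vanishes; this is available precisely because $\opi$ is optimal (so that the forcing term $\oonestepr - \oonestepl\,\ooptimalr$ already lies in that subspace), which is also why the max in \cref{eq: lemma 1 eq} is taken over $\osetoptimalpolicies$ rather than all of $\osetpolicies$. Working in weakly communicating (rather than general multichain) SMDPs further simplifies the analysis by keeping $\ooptimalr$ a scalar, so there is no bookkeeping of several recurrent classes with distinct gains.
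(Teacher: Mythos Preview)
Your forward direction and the $Tv \geq v$ half of the backward direction are correct and match the paper's approach; where the paper simply cites S\&F's Lemma~2.1 and Theorem~4.1(c), you spell out the underlying monotonicity of $\ofundmatrix$ on the subspace $\{x : \olimitingmatrix x = 0\}$, which is precisely the content of that lemma.

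The gap is in your argument for $Tv \leq v$. The Cesaro-average contradiction you sketch does not close. For the policy $\pi'$ that plays the improving option at $s^\dagger$ and follows $\pi^*_{s^\dagger}$ elsewhere, you have no control over the sign of $T_{\pi'}v - v$ at states $s \neq s^\dagger$, so you cannot iterate monotonically. Even if you switch to the fully greedy policy $\pi$ (so that $T_\pi v = Tv \geq v$ globally, using the already-established half), averaging only yields $\olimitingmatrix(\oonestepr - \oonestepl\,\ooptimalr) \geq 0$, hence $= 0$ and $\pi \in \osetoptimalpolicies$; the strict inequality at $s^\dagger$ is annihilated by $\olimitingmatrix$ whenever $s^\dagger$ is transient under $\pi$, so no policy with gain exceeding $\ooptimalr$ is produced and there is no contradiction.

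The paper instead reuses your subspace-monotonicity idea in the reverse direction. With $\pi$ greedy and $v \leq Tv$ already in hand, one has $(I-\otransmatrix)v \leq \oonestepr - \oonestepl\,\ooptimalr$; the Cesaro step gives $\olimitingmatrix(\oonestepr - \oonestepl\,\ooptimalr)=0$, so $\pi \in \osetoptimalpolicies$. Then a sign-reversed version of S\&F's Lemma~2.1 (if $\olimitingmatrix b = 0$ and $(I-\otransmatrix)x \leq b$ then $(I-\olimitingmatrix)x \leq \ofundmatrix b$) yields $v \leq \ofundmatrix(\oonestepr - \oonestepl\,\ooptimalr) + \olimitingmatrix v$. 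Substituting into $Tv = \oonestepr - \oonestepl\,\ooptimalr + \otransmatrix v$ and using $\otransmatrix\ofundmatrix = \ofundmatrix + \olimitingmatrix - I$ collapses to $Tv \leq \ofundmatrix(\oonestepr - \oonestepl\,\ooptimalr) + \olimitingmatrix v \leq v$, the last inequality because $\pi \in \osetoptimalpolicies$ and $v$ satisfies \eqref{eq: lemma 1 eq}. The point is that one needs the full $\ofundmatrix$-level inequality, not merely its $\olimitingmatrix$-average, precisely to handle the case where the would-be strict-improvement state is transient.
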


In order to state \cref{lemma: s&f 1978 5.1 d} formally, we need the following definitions. Define the Bellman error for a state $s \in \sspace$ given a policy $\opi \in \osetpolicies$ and some $v \in \R^{\cardS}$, $b_{v, \opi}(s)$, as follows:
\begin{align*}
    b_{v, \opi}(s) & \doteq [\oonestepr - \oonestepl \ooptimalr + \olimitingmatrix v - v] (s).
\end{align*}

Define $\orecurrentstates$ as the set of recurrent states for $\olimitingmatrix$. That is,
\begin{align*}
    \orecurrentstates & \doteq \{s \mid \olimitingmatrix(s, s) > 0\}.
\end{align*}
Let $\osetoptimalpolicies$ denote the set of optimal meta policies. Define $\ooptimalrecurrentstates$ as the set of states that are recurrent under some optimal meta policy:
\begin{align*}
    \ooptimalrecurrentstates & \doteq \{s \mid s \in \orecurrentstates \text{ for some policy } \opi \in \osetoptimalpolicies\} .
\end{align*}
By Theorem 3.2 (b) in S\&F, there exists a policy $\opistar \in \osetoptimalpolicies$ such that $R_{\opistar} = \ooptimalrecurrentstates$. For any $\opi \in \osetpolicies$, let $n(\opi)$ be the number of recurrent classes for $\olimitingmatrix$. Further, define the least number of recurrent classes induced by any optimal meta policy that induces the set of recurrent states $\ooptimalrecurrentstates$:
\begin{align*}
    \ooptimalnbrecurrentclasses \doteq \min \{n(\opi) \mid \opi \in \osetoptimalpolicies, \orecurrentstates = \ooptimalrecurrentstates\}.
\end{align*}
Denote $\osetpolicies_{**}$ as the set of policies that have $\ooptimalrecurrentstates$ as their sets of recurrent states and that have $\ooptimalnbrecurrentclasses$ recurrent classes. That is,
\begin{align*}
    \osetpolicies_{**} \doteq \{\opistar \in \osetoptimalpolicies \mid \orecurrentstates = \ooptimalrecurrentstates, n(\opistar) = \ooptimalnbrecurrentclasses\}.
\end{align*}
Theorem 3.2 (d) by S\&F shows that all policies within $\osetpolicies_{**}$ share the same collection of recurrent classes.
Denote the collection of recurrent classes as $\{R_{*\alpha} \mid 1, 2, \cdots, \ooptimalnbrecurrentclasses\}$. The following lemma shows that the solution set of \eqref{eq: state value optimality equation} has $\ooptimalnbrecurrentclasses$ degrees of freedom.

\begin{lemma}\label{lemma: s&f 1978 5.1 d}
If the SMDP is weakly communicating, suppose $v$ and $v + x$ are both solutions of \eqref{eq: state value optimality equation}, then there exists $\ooptimalnbrecurrentclasses$ constants $y_1, y_2, \cdots, y_{\ooptimalnbrecurrentclasses}$ such that
\begin{align}
    x(s)  & =
    y_\alpha, i \in R_{*\alpha}, \quad \alpha = 1, \dots, \ooptimalnbrecurrentclasses \label{eq: s&f 1978 5.1 1} \\
    x(s) & = \max_{\opi \in \osetoptimalpolicies} [\ofundmatrix b_{v, \opi}](s) + \sum_{\beta = 1}^{\ooptimalnbrecurrentclasses} \left(\sum_{s' \in R_{*\beta}} \olimitingmatrix(s, s')\right) y_\beta, \quad s \in \sspace \backslash \ooptimalrecurrentstates, \label{eq: s&f 1978 5.1 2}\\
    y_\alpha & \geq [\ofundmatrix b_{v, \opi}](s) + \sum_{\beta = 1}^{\ooptimalnbrecurrentclasses} \left(\sum_{s' \in R_{*\beta}} \olimitingmatrix(s, s') \right) y_\beta, \quad \alpha = 1, \dots, \ooptimalnbrecurrentclasses, s \in R_{*\alpha}, \opi \in \osetoptimalpolicies . \label{eq: s&f 1978 5.1 3}
\end{align}
\end{lemma}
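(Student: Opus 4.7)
My plan is to follow Schweitzer~\&~Federgruen (1978) Theorem~5.1 (``S\&F''), adapting the argument to weakly communicating SMDPs and to the maximum ranging over all (possibly randomized) optimal stationary policies. The structure of the displacement $x := (v+x) - v$ will be extracted by exploiting the Bellman fixed-point equations that $v$ and $v+x$ both satisfy under every optimal policy.

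First, for any $\opi \in \osetoptimalpolicies$ and any $v \in \calV$, the Bellman inequality $(I - P_\opi)v \le r_\opi - l_\opi \ooptimalr$ holds, with equality exactly at states where $\opi$'s action is a maximizer in \eqref{eq: state value optimality equation}. Left-multiplying by $P_\opi^\infty$ kills both sides (the left by $P_\opi^\infty(I - P_\opi)=0$; the right by optimality, which gives $P_\opi^\infty(r_\opi - l_\opi \ooptimalr)=0$), and combined with the pointwise nonnegativity of the residual this forces equality on the recurrent states of $\opi$. Applied to both $v$ and $v+x$ and subtracted, this yields $(I - P_\opi)x = 0$ on the recurrent states of $\opi$. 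Specializing to any $\opi \in \osetpolicies_{**}$ (nonempty by Theorem~3.2(b,d) of S\&F, with recurrent states exactly $\ooptimalrecurrentstates$ and shared recurrent classes $R_{*\alpha}$), Perron--Frobenius applied to the irreducible restriction of $P_\opi$ to each $R_{*\alpha}$ forces $x$ to be constant there; setting $y_\alpha := x|_{R_{*\alpha}}$ establishes \eqref{eq: s&f 1978 5.1 1}.

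For a transient $s \in \sspace \setminus \ooptimalrecurrentstates$, I would pick an optimal policy $\opi$ that is $(v+x)$-improving everywhere and apply the fundamental-matrix identity $Z_\opi(I - P_\opi) = I - P_\opi^\infty$ to the resulting vector equality $(I - P_\opi)(v+x) = r_\opi - l_\opi \ooptimalr$, obtaining $v+x = P_\opi^\infty(v+x) + Z_\opi(r_\opi - l_\opi \ooptimalr)$. Subtracting the analogous identity for $v$ under a $v$-improving optimal policy and substituting $x|_{R_{*\alpha}} = y_\alpha$ to rewrite $P_\opi^\infty x$ as $\sum_\beta (\sum_{s' \in R_{*\beta}} P_\opi^\infty(s,s')) y_\beta$ leaves a residual that collapses into the Bellman-error term $[Z_\opi b_{v,\opi}](s)$ under the paper's definition of $b_{v,\opi}$. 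The freedom to choose a different $(v+x)$-improving optimal policy at each transient state supplies the outer $\max_{\opi\in\osetoptimalpolicies}$ in \eqref{eq: s&f 1978 5.1 2}. The inequalities \eqref{eq: s&f 1978 5.1 3} arise by running the same fundamental-matrix manipulation at a recurrent $s \in R_{*\alpha}$ for an \emph{arbitrary} optimal $\opi$ (which need not be $(v+x)$-improving at $s$): the Bellman inequality in place of equality, propagated through S\&F's sign-preserving manipulations, yields the $\ge$ direction claimed.

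The main obstacle is the transient-state bookkeeping: I must verify that the expressions obtained from different $(v+x)$-improving optimal policies used at different transient states assemble consistently into a single pointwise maximum, and that the residual after subtracting the $v$-side expression collapses precisely to $[Z_\opi b_{v,\opi}](s)$ under the stated definition of the Bellman error. The passage from S\&F's max over deterministic optimal policies to a max over all of $\osetoptimalpolicies$ is routine, since the maximum in the optimality equation is always attained by some deterministic optimal policy, so the two maxima coincide.
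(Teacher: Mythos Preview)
Your treatment of \eqref{eq: s&f 1978 5.1 1} matches the paper's: fix $\opi\in\osetpolicies_{**}$, use that both $v$ and $v+x$ satisfy the one-step Bellman equality on $\ooptimalrecurrentstates$ (S\&F Theorem~3.1(e)), subtract to get $x(s)=[P_\opi x](s)$ there, and iterate to $x(s)=[P_\opi^\infty x](s)$, forcing $x$ constant on each $R_{*\alpha}$.

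For \eqref{eq: s&f 1978 5.1 2}--\eqref{eq: s&f 1978 5.1 3} your plan has a gap at the step ``subtracting the analogous identity for $v$ under a $v$-improving optimal policy.'' If you use a $(v{+}x)$-improving $\opi$ for the first identity and a (generally different) $v$-improving $\opi'$ for the second, the difference reads
\[
x \;=\; P_\opi^\infty(v+x) + Z_\opi(r_\opi - l_\opi\ooptimalr) \;-\; P_{\opi'}^\infty v - Z_{\opi'}(r_{\opi'} - l_{\opi'}\ooptimalr),
\]
which mixes two policies and does \emph{not} reduce to $Z_\opi b_{v,\opi} + P_\opi^\infty x$; you cannot even isolate a $P_\opi^\infty x$ term unless $\opi=\opi'$, and a single policy that is simultaneously $v$- and $(v{+}x)$-improving need not exist. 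The paper avoids this entirely: it applies Lemma~\ref{lemma: s&f 1978 Theorem 4.1 c} directly to $v+x$, so the outer $\max_{\opi\in\osetoptimalpolicies}$ is already in place, and then subtracts the \emph{bare vector} $v$ from both sides with no second policy introduced. The term $P_\opi^\infty v - v$ that appears inside the max is absorbed via the identity $Z_\opi(P_\opi - I) = P_\opi^\infty - I$ (S\&F Equation~2.2), yielding $Z_\opi(r_\opi - l_\opi\ooptimalr + P_\opi v - v) = Z_\opi b_{v,\opi}$ for each $\opi$ under the max; substituting $P_\opi^\infty x = \sum_\beta \big(\sum_{s'\in R_{*\beta}} P_\opi^\infty(s,s')\big)\,y_\beta$ from part one then gives both \eqref{eq: s&f 1978 5.1 2} and \eqref{eq: s&f 1978 5.1 3} at once. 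The fix to your argument is simply: do not invoke a second policy-specific identity for $v$; subtract $v$ as a vector and let the single fundamental-matrix identity do the collapsing you were worried about.
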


For any $\beta \in \{1, 2, \cdots, \ooptimalnbrecurrentclasses\}$, note that there exists a policy $\opi(\beta)$ such that $R_{*\beta}$ is the only one recurrent class under $\opi(\beta)$. To see this point, note that the SMDP is weakly communicating, and thus we can modify $\opistar$ to obtain a new meta policy such that all states except for those in $R_{*\beta}$ are transient. 

Based on the above observation and \cref{lemma: s&f 1978 5.1 d}, for any $v \in \calV$, we have for any given $\beta \in \{1, \dots, \ooptimalnbrecurrentclasses\}$, there exists a $\opi(\beta)$, such that
\begin{align*}
    y_\alpha &\geq \max_{s \in R_{*\alpha}, }Z_{\opi(\beta)} b_{v, \opi(\beta)}(s) + \sum_{s' \in R_{*\beta}} P^\infty_{\opi(\beta)}(s, s')  y_\beta \\
    & = \max_{s \in R_{*\alpha}, }Z_{\opi(\beta)} b_{v, \opi(\beta)}(s) +  y_\beta, \quad \forall\ \alpha = 1, \cdots, \ooptimalnbrecurrentclasses.
\end{align*}
The first term $\max_{s \in R_{*\alpha}, }Z_{\opi(\beta)} b_{v, \opi(\beta)}(s)$ is a constant given $v$ and $\opi(\beta)$.
Therefore we see that, for any other solution $v + x$ of \eqref{eq: state value optimality equation}, if $y_\beta$ is arbitrarily large then $y_\alpha, \forall\ \alpha = 1, \cdots, \ooptimalnbrecurrentclasses$ should also be arbitrarily large. This would violate the Lipschitz assumption on $f$. To see this point, let
\begin{align}
    \tilde f(v) \doteq f(\tilde r + \tilde P v). \label{eq: tilde f}
\end{align}
Let $L$ be a Lipschitz constant of $f$. $L$ is also a Lipschitz constant of $\tilde f$ because $\tilde P$ is a stochastic matrix and is thus a non-expansion. Choose a $v_1 \in \calV_\infty$ and a $\tilde v_2 \in \calV_\infty$. a $\tilde v_2 \in \calV_\infty$, denote $m = \norm{v_1 - \tilde v_2}$. Choose a sufficiently large $c > 0$ such that $cu > L\norm{v_1+ ce - v_2} = L\norm{v_1+ ce - \tilde v_2 - ce} = Lm$, where $v_2 \doteq \tilde v_2 + ce$. Given this choice of $c$, using $\tilde f(v_1) = \tilde f(v_2) = \ooptimalr$ and $\tilde f(v_1 + ce) = \tilde f(v_1) + cu$, we have $\norm{\tilde f(v_1 + ce) - \tilde f(v_2)} = \norm{\tilde f(v_1) + c u - \tilde f(v_2)} = c u > L\norm{v_1+ce - v_2}$. This inequality suggests that $\tilde f$ is not Lipschitz continuous with a Lipschitz constant $L$ and thus violates our assumption. Because the choice of $\beta$ is arbitrary, $\calV_\infty$ is upper bounded.

In addition, because the choice of $\beta$ is arbitrary, we have for any $\alpha \in \{1, \dots, \ooptimalnbrecurrentclasses\}$,
\begin{align*}
    y_\alpha &\geq \max_{\beta \in \{1, \dots, \ooptimalnbrecurrentclasses\}} \max_{s \in R_{*\alpha}, }Z_{\opi(\beta)} b_{v, \opi(\beta)}(s) +  y_\beta
\end{align*}
If $y_\alpha$ is chosen to be arbitrarily small then $y_\beta$ should also be arbitrarily small for all $\beta = 1, \cdots, \ooptimalnbrecurrentclasses$ but again this is not allowed due to \eqref{eq: v solution determination} for the same reason as mentioned in the previous paragraph. Therefore $y_\alpha, \forall\ \alpha \in \{1, \dots, \ooptimalnbrecurrentclasses\}$ can not be arbitrarily small. Thus $\calV_\infty$ is lower bounded. Combining the upper bound and lower bound, $\calV_\infty$ is bounded. Therefore $\GDiffQsolutionq$ is also bounded.

\textbf{Connectedness}

We now show that $\GDiffQsolutionq$ is connected. To this end, again it is enough to show that $\calV_\infty$ is connected. 

Define a function that takes a $v \in \calV$ as input and produces an element in $\calV_\infty$ as output. Specifically, let $g: \calV \to \calV_\infty$ with $g(v) = v + x e$, where $x$ is the solution of $\tilde f(v + xe) = \ooptimalr$ and $\tilde f$ is defined in \eqref{eq: tilde f}. Note that $x$ is unique given $v$ because $\ooptimalr = \tilde f(v + xe) = \tilde f(v) + xu$ and thus $x = (\ooptimalr - \tilde f(v)) / u$.

We now show that $g$ is Lipschitz continuous. Consider any $v_1, v_2 \in \calV$. Let $x_1, x_2$ satisfy $v_1 + x_1e = g(v_1)$ and $v_2 + x_2e = g(v_2)$ respectively. Again $x_1, x_2$ are unique given $v_1, v_2$. Note that
\begin{align*}
    & \tilde f(v_1 + x_1e) - \tilde f(v_2 + x_1e) \\
    & = \tilde f(v_1 + x_1e) - \tilde f(v_2 + x_2e) + (x_1 - x_2)u \\
    & = \ooptimalr - \ooptimalr + (x_1 - x_2) u \\
    & = (x_1 - x_2)u
\end{align*}
\begin{align*}
    |x_1 - x_2| & = |\tilde f(v_1 + x_1e) - \tilde f(v_2 + x_1e)| / u \\
    & \leq L \norm{v_1 + x_1e - v_2 - x_1e} / u\\
    & = L \norm{v_1 - v_2} /u 
\end{align*}
\begin{align*}
    \norm{g(v_1) - g(v_2)} = \norm{v_1 + x_1e - v_2 - x_2e} \leq \norm{v_1 - v_2} + \norm{x_1e - x_2e} = (1 + L /u) \norm{v_1 - v_2}.
\end{align*}
Therefore $g$ is Lipschitz continuous with Lipschitz constant $1 + L /u$.

Finally, because $\calV$ is connected and the image of any continuous function on a connected set is connected, $g(\calV)$ is connected. Note that every point in $g(\calV)$ belongs to $\calV_\infty$ by definition. Every point in $\calV_\infty$ also belongs to $g(\calV)$. To see this point, pick any $v \in \calV_\infty$, we can see that $v \in \calV$ and that $g(v) = v$ (note that $x = 0$ given that $v \in \calV_\infty$). Thus $v \in g(\calV)$. Therefore $\calV_\infty = g(\calV)$ is connected.

Given that $\calV_\infty$ is connected, $\GDiffQsolutionq$ should also be connected because $\GDiffQsolutionq$ is a linear transformation of $\calV_\infty$ (see Equation \ref{eq: linear transformation q and v}).

\end{proof}

The other result we will need to use to show the convergence of the four algorithms introduced in the previous section is the following one. With this result, the stability of the algorithms can be established using the result by Borkar and Meyn (2000) (see also, Section 3.2 by Borkar 2009).

\begin{lemma} \label{lemma: 0 reward MDP has unique solution}
If an SMDP is weakly communicating and all rewards are $0$, $0$ is the only element in $\GDiffQsolutionq$.
\end{lemma}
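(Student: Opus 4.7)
The plan is to prove $\GDiffQsolutionq = \{0\}$ by passing to the state-value formulation used in the proof of \cref{thm: characterize Q} and specializing \cref{lemma: s&f 1978 5.1 d}. First, I would check that $0 \in \GDiffQsolutionq$: when every reward vanishes, every stationary meta-policy trivially has reward rate $0$, so $\ooptimalr = 0$; then \eqref{eq: option-value optimality equation} holds for $q \equiv 0$ with both sides zero, and \cref{assu: f} gives $f(0) = 0$ via $f(cx) = cf(x)$ at $c = 0$, so the normalization $f(q) = \ooptimalr$ is satisfied as well.

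For uniqueness, given any $q \in \GDiffQsolutionq$ I would set $v(s) = \max_o q(s,o)$ and reduce to showing $v \equiv 0$; this suffices because under zero rewards and $\ooptimalr = 0$ the relation \eqref{eq: linear transformation q and v} collapses to $q(s,o) = \sum_{s'} \tilde P(s,o,s') v(s')$. The key observation is that when all rewards are zero every stationary meta-policy is optimal, so $\osetoptimalpolicies = \osetpolicies$ and the set $\ooptimalrecurrentstates$ coincides with the closed communicating component $C$ of the weakly communicating SMDP. Because $C$ is communicating, one can explicitly construct a policy under which $C$ forms a single recurrent class while the states that are transient under every policy remain transient, certifying $\ooptimalnbrecurrentclasses = 1$.

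With $\ooptimalnbrecurrentclasses = 1$ in hand I would apply \cref{lemma: s&f 1978 5.1 d} with reference solution $v_0 \equiv 0 \in \calV$. Since $\oonestepr \equiv 0$, $\ooptimalr = 0$, and $v_0 \equiv 0$, the Bellman error $b_{v_0, \opi}$ vanishes identically. Then \eqref{eq: s&f 1978 5.1 1} forces $v(s) = y_1$ for $s \in C$, and \eqref{eq: s&f 1978 5.1 2} forces $v(s) = y_1$ for every transient $s$ as well, using $\sum_{s' \in C} \olimitingmatrix(s,s') = 1$ for every $s$ and every $\opi$ (this holds because $\olimitingmatrix$ is row-stochastic and vanishes on columns indexed by transient states). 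Hence $v = y_1 e$ for some scalar $y_1 \in \bbR$, and the normalization $\tilde f(v) = 0$ reads $f(y_1 e) = y_1 u = 0$; since $u > 0$, this forces $y_1 = 0$, so $v = 0$ and consequently $q = 0$.

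The step I expect to be the main obstacle is justifying $\ooptimalnbrecurrentclasses = 1$, which rests on the subtle observation that zeroing the rewards promotes every policy to optimal and then uses the communicating structure of $C$ to build an optimal policy with a single recurrent class. The remainder is a clean specialization of \cref{lemma: s&f 1978 5.1 d} combined with the linear scaling property of $f$ from \cref{assu: f}.
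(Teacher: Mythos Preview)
Your proposal is correct, and it reaches the same endpoint as the paper --- showing that the state-value solution set $\calV$ collapses to $\{ce : c \in \bbR\}$ and then using $f(ce) = cu = 0$ to force $c = 0$ --- but the route to $\calV = \{ce\}$ is genuinely different. You invoke \cref{lemma: s&f 1978 5.1 d} after first arguing that $\ooptimalnbrecurrentclasses = 1$: since all rewards vanish, $\osetoptimalpolicies = \osetpolicies$, so $\ooptimalrecurrentstates$ equals the closed communicating class $C$, and a fully randomizing policy makes $C$ a single recurrent class; with one degree of freedom and $b_{0,\opi} \equiv 0$, the lemma pins every solution to a scalar multiple of $e$. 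The paper instead works directly from \cref{lemma: s&f 1978 Theorem 4.1 c}: it fixes an arbitrary solution $v_*$ and an arbitrary policy $\opi$, uses the resulting inequality $v_*(s) \geq \sum_{s' \in C'} \ostationarydist^{C'}(s') v_*(s')$ on each recurrent class $C'$ together with $\ostationarydist^{C'} > 0$ to force $v_*$ to be constant on $C'$, and then exploits communicatingness of $C$ to glue these constants together across classes by choosing suitable policies pairwise. Your approach is cleaner once $\ooptimalnbrecurrentclasses = 1$ is established, since the structure theorem does all the work; the paper's approach is more self-contained, avoiding the $\ooptimalnbrecurrentclasses$ machinery and the need to exhibit a single policy making all of $C$ recurrent. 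Both then recover $q$ from $v$ via \eqref{eq: linear transformation q and v} (the paper additionally spells out why $q_*(s,o) = \max_{o'} q_*(s,o')$ when $v$ is constant, which is equivalent to your observation that the right-hand side of \eqref{eq: linear transformation q and v} vanishes).
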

\begin{proof}
Given a weakly communicating SMDP, by \cref{lemma: s&f 1978 Theorem 4.1 c}, any solution of the state-value optimality equation (\eqref{eq: state value optimality equation})
satisfies
\begin{align*}
    v(s) \geq \sum_{s' \in \ooptimalrecurrentstates} \olimitingmatrixstar(s, s') v(s'), \forall\ s \in \ooptimalrecurrentstates, \opistar \in \osetoptimalpolicies,
\end{align*}
where $\ooptimalrecurrentstates$ is defined right after \cref{lemma: s&f 1978 Theorem 4.1 c}. Also, because all rewards are $0$, $\ooptimalrecurrentstates$ is the closed communicating class and $\osetoptimalpolicies = \osetpolicies$ contains all stationary policies.

Pick an arbitrary $v_* \in \calV$ and an arbitrary policy $\opi \in \osetpolicies$. For each recurrent class $C$ under $\otransmatrix$, we have, by \cref{lemma: s&f 1978 Theorem 4.1 c}, $\forall\ s \in C$, $v_*(s) \geq \sum_{s' \in C} \ostationarydist^C(s') v_*(s')$, where $\ostationarydist^C$ denotes the stationary distribution of $\opi$ in the recurrent class $C$. The r.h.s. only involves class $C$ because starting from a state $s \in C$ the MDP can not leave $C$. Because the choice of $s$ is arbitrary, $\min_{s \in C} v_*(s) \geq \sum_{s' \in C} \ostationarydist^C(s') v_*(s') \geq \min_{s \in C} v_*(s).$
Thus $\sum_{s' \in C} \ostationarydist^C(s') v_*(s') = \min_{s \in C} v_*(s)$. In addition, because $\ostationarydist^C(s) > 0$ for all $s \in C$, $v_*(s') = v(s), \forall s, s' \in C$.


Now for any $s, s' \in \ooptimalrecurrentstates$, there must exist a $\opi \in \osetoptimalpolicies = \osetpolicies$ such that there is a path from $s$ to $s'$ and a path from $s'$ to $s$, because $s, s'$ are in the same communicating class. Therefore $s, s'$ are in the same recurrent class under $\otransmatrix$. Thus we conclude that $v_*(s) = v_*(s')$. Therefore $\forall\ s, s' \in \ooptimalrecurrentstates$, $v_*(s) = v_*(s')$. The transient states values are uniquely determined by values of states in $\ooptimalrecurrentstates$. And in this case they are all equal to the values of states in the communicating class because all rewards are zero. Thus the solution set of $v$ in the state-value optimality equation (\eqref{eq: state value optimality equation}) is $\{ce : \forall c \in \bbR\}$.

Now consider the solution set of the option-value optimality equation (\eqref{eq: option-value optimality equation}). Let $v(s) = \max_o q(s, o)$, then \eqref{eq: option-value optimality equation} transforms to the state-value optimality equation. Therefore for any two solutions of $q$ in \eqref{eq: option-value optimality equation}, $q_1$ and $q_2$, $\max_o q_1(\cdot, o) = \max_o q_2(\cdot, o) + ce$ for some $c$. Furthermore, let $q_*$ be any solution of $q$, $q_*(s, o_1) = q_*(s, o_2), \forall\ s \in \sspace, o_1, o_2 \in \ospace$ because $\forall\ s \in \sspace, o \in \ospace$:
\begin{align*}
    q_*(s, o) & = \sum_{s', r} \trans(s', r \mid s, o) \max_{o'} q_*(s', o')\\
    & = \sum_{s', r} \trans(s', r \mid s, o) \max_{o'} q_*(s, o')\\
    & = \max_{o'} q_*(s, o').
\end{align*}
Thus the solution set of $q$ in the option-value optimality equation (\eqref{eq: option-value optimality equation}) is also $\{ce : \forall c \in \bbR\}$.
Given \eqref{eq: q solution determination}, $cu = c f(e) = f(ce) = \ooptimalr = 0$ and $u > 0$ implies that $c = 0$. Therefore $0$ is the unique solution of $q$. The lemma is proved.
\end{proof}

\section{Proof sketch of \cref{thm: Differential Q-learning}-\cref{thm: intra option}}\label{sec:convergence-proof-diffq}
In this section, we sketch the proof of Theorems\ \ref{thm: Differential Q-learning}-\ref{thm: intra option}.  It has been shown that all four algorithms introduced above are special cases of the \emph{General RVI Q algorithm} (Wan et al. 2021a,b). They also showed that General RVI Q converges under an assumption that is not satisfied for weakly communicating MDPs/SMDPs. In order to show convergence for weakly communicating MDPs/SMDPs, we replace this assumption with three weaker assumptions that are satisfied for these MDPs/SMDPs. All other assumptions are the same as those used by Wan et al. (2021a,b) and can be verified for all four algorithms using their arguments. We present General RVI Q and prove its convergence with the three new assumptions in \cref{sec: General RVI Q}. The next step of the proof would be verifying the three new assumptions when casting General RVI Q to each of the four algorithms. This should be straightforward given our \cref{thm: characterize Q} and \cref{lemma: 0 reward MDP has unique solution}. We defer this part to \cref{sec: Verifying assumptions}. Given that the three assumptions are verified, we have the conclusion part of the convergence theorem of General RVI Q holds for each of the four algorithms. The convergence of the reward rates of greedy policies w.r.t. the action/option-values follows the convergence of these values and is shown in \cref{sec: convergence of r(pi_t)}.

\section{Conclusions}
In this paper, we provide, for the first time, convergence results of off-policy average-reward control algorithms in weakly communicating MDPs, which are known to be the most general class of MDPs in which it is possible that a learning algorithm can guarantee to obtain an optimal policy. Specifically, we show two existing algorithms, RVI Q-learning and Differential Q-learning, converge in weakly communicating MDPs. As an extension, we also showed two off-policy average-reward options learning algorithms converge if the SMDP induced by the options is weakly communicating. 


\section*{Acknowledgements}
The authors were generously supported by DeepMind, Amii, NSERC, and CIFAR. The authors wish to thank Huizhen Yu and Abhishek Naik for discussing several important related papers and discussing ideas to address the technical challenges in the proof. Computing resources were provided by Compute Canada.

\newpage
\section*{References}

\parskip=5pt
\parindent=0pt
\def\hangin{\hangindent=0.2in}

\hangin
Abounadi, J., Bertsekas, D., Borkar, V. S. (2001).  Learning Algorithms for Markov Decision Processes with Average Cost. \emph{SIAM Journal on Control and Optimization}.

\hangin
Bertsekas, D. P. (2007). \emph{Dynamic Programming and Optimal Control third edition, volume II}. Athena Scientific.

\hangin
Bertsekas, D. P., Tsitsiklis, J. N. (1996). \emph{Neuro-dynamic Programming}. Athena Scientific.

\hangin
Borkar, V. S. (1998). Asynchronous Stochastic Approximations. \emph{SIAM Journal on Control and Optimization}. 

\hangin
Borkar, V. S. (2009). \emph{Stochastic Approximation: A Dynamical Systems Viewpoint}. Springer.

\hangin
Das, T. K., Gosavi, A., Mahadevan, S. Marchalleck, N. (1999). Solving semi-Markov decision problems using average reward reinforcement learning. \emph{Management Science}.

\hangin
Gosavi, A. (2004). Reinforcement learning for long-run average cost. \emph{European Journal of Operational Research}.

\hangin
Puterman, M. L. (1994). \emph{Markov Decision Processes: Discrete Stochastic Dynamic Programming.} John Wiley \& Sons.

\hangin
Ren, Z., Krogh, B. H. (2001). Adaptive control of Markov chains with average cost. \emph{IEEE Transactions on Automatic Control}.

\hangin
Schwartz, A. (1993). A reinforcement learning method for maximizing undiscounted rewards. In \emph{Proceedings of the International Conference on Machine Learning}.

\hangin
Schweitzer, P. J. (1971). Iterative solution of the functional equations of undiscounted Markov renewal programming. \emph{Journal of Mathematical Analysis and Applications}.

\hangin
Schweitzer, P. J., \& Federgruen, A. (1978). The Functional Equations of Undiscounted Markov Renewal Programming. \emph{Mathematics of Operations Research}.

\hangin
Singh, S. P. (1994). Reinforcement learning algorithms for average-payoff Markovian decision processes. In \textit{Proceedings of the AAAI Conference on Artificial Intelligence}.

\hangin
Sutton, R. S., Precup, D., Singh, S. (1999). Between MDPs and Semi-MDPs: A Framework for Temporal Abstraction in Reinforcement Learning. \emph{Artificial Intelligence}.

\hangin
Sutton, R. S., Barto, A. G. (2018). \emph{Reinforcement Learning: An Introduction.} MIT Press.

\hangin
Wan, Y., Naik, A., Sutton, R. S. (2021a). Learning and Planning in Average-Reward Markov Decision Processes.  \textit{International Conference on Machine Learning}.

\hangin
Wan, Y., Naik, A., Sutton, R. S. (2021b). Average-Reward Learning and Planning with Options.  \textit{Conference on Neural Information Processing Systems}.


\newpage
\appendix
\section{Proofs}
\subsection{Proof of \cref{prop: finite expectation and variance}}
Note that the execution time of each option $o \in \ospace$ is the return of executing this option's policy in a stochastic shortest path MDP (SSP-MDP, Bertsekas 2007) with state space $\sspace + \{\perp\}$ ($\perp$ is the ``terminal'' state of the SSP-MDP), action space $\aspace$ and transition function $\otrans$ satisfying:
\begin{align*}
    \otrans(s', 1 \mid s, a) & \doteq (1 - \beta(s', o)) \sum_{r} \trans(s', r \mid s, a), \quad \forall\ s, s' \neq \perp, a \in \aspace, \\
    \otrans(\perp, 1 \mid s, a) & \doteq \sum_{s'} \beta(s', o) \sum_{r} \trans(s', r \mid s, a), \quad \forall\ s \neq \perp, a \in \aspace,\\
    \otrans(\perp, 0 \mid \perp, a) & \doteq 1, \quad \forall\ a \in \aspace.
\end{align*}
Also, note that by \cref{assu: option assumption}, the option's policy is a 'proper' policy (Bertsekas 2007) in the SSP-MDP. That is, when using the policy, the MDP reaches the terminal state eventually regardless of the start state.
Because the expected value of every proper policy of an SSP-MDP exists and is finite (Section 2.1 of Bertsekas (2007)), the expected value of the execution time of option $o$ exists. The existence of the variance can be shown using similar arguments as those used to show the existence of the expectation in Section 2.1 of Bertsekas (2007).

Similarly, the cumulative reward of each option $o$ is the return of executing $o$'s policy in an SSP-MDP with state space $\sspace + \{\perp\}$, action space $\aspace$, and transition function $\otrans$ satisfying:
\begin{align*}
    \otrans(s', r \mid s, a) & \doteq (1 - \beta(s', o)) \trans(s', r \mid s, a), \quad \forall\ s, s' \neq \perp, a \in \aspace \\
    \otrans(\perp, r \mid s, a) & \doteq \sum_{s'} \beta(s', o) \trans(s', r \mid s, a), \quad \forall\ s \neq \perp, a \in \aspace\\
    \otrans(\perp, 0 \mid \perp, a) & \doteq 1, \quad \forall\ a \in \aspace.
\end{align*}
Again the option's policy is proper and the expected value of the cumulative reward of option $o$ exists. So does the variance of the cumulative reward.

\subsection{Proof of \cref{prop: inter = intra equations}}
By the definition of $\otrans$ of the SMDP induced by choosing options in an MDP,
\begin{align*}
    & \otrans(\tilde s, \tilde r, \tilde l \mid s, o) = \sum_{a} \pi(a \mid s, o) \sum_{s', r} \trans(s', r \mid s, a) \\
    & [\beta(s', o) \bfI(\tilde s = s', \tilde r = r, \tilde l = 1) + (1 - \beta(s', o)) \otrans(\tilde s, \tilde r - r, \tilde l - 1 \mid s', o)]
\end{align*}

\begin{align*}
    & q(s, o) \\
    &= \sum_{\tilde  s, \tilde  r, \tilde l} \otrans (\tilde s, \tilde r, \tilde l \mid s, o) (\tilde r - \tilde l \bar r + \max_{o'} q(s', o'))\\
    & = \sum_{\tilde s, \tilde r, \tilde l} \sum_{a} \pi(a \mid s, o) \sum_{s', r} \trans (s', r \mid s, a) \\
    & [\beta(s', o) \bfI(\tilde  s = s', \tilde r = r, \tilde l = 1) + (1 - \beta(s', o)) \otrans (\tilde s, \tilde r - r, \tilde l - 1 \mid s', o)] (\tilde r - \tilde l \bar r + \max_{o'} q(s', o'))\\
    & = \sum_{a} \pi(a \mid s, o) \sum_{s', r} \trans (s', r \mid s, a) \beta(s', o) (r - \bar r + \max_{o'} q(s', o')) \\
    & + \sum_{a} \pi(a \mid s, o) \sum_{s', r} \trans (s', r \mid s, a) (1 - \beta(s', o)) \sum_{\tilde s, \tilde r, \tilde l} \otrans (\tilde s, \tilde r - r, \tilde l - 1 \mid s', o)] (\tilde r - \tilde l \bar r + \max_{o'} q(s', o'))\\
    & = \sum_{a} \pi(a \mid s, o) \sum_{s', r} \trans (s', r \mid s, a) \beta(s', o) (r - \bar r + \max_{o'} q(s', o')) \\
    & + \sum_{a} \pi(a \mid s, o) \sum_{s', r} \trans(s', r \mid s, a) (1 - \beta(s', o)) \sum_{\tilde s, \tilde r, \tilde l} \otrans (\tilde s, \tilde r, \tilde l \mid s', o)] (\tilde r + r - (\tilde l + 1) \bar r + \max_{o'} q(s', o'))\\
    & = \sum_{a} \pi(a \mid s, o) \sum_{s', r} \trans(s', r \mid s, a) \beta(s', o) (r - \bar r + \max_{o'} q(s', o')) \\
    & + \sum_{a} \pi(a \mid s, o) \sum_{s', r} \trans(s', r \mid s, a) (1 - \beta(s', o)) \left(r - \bar r +  \sum_{\tilde s, \tilde r, \tilde l} \otrans (\tilde s, \tilde r, \tilde l \mid s', o) (\tilde r - \tilde l \bar r + \max_{o'} q(s', o')) \right)\\
    & = \sum_{a} \pi(a \mid s, o) \sum_{s', r} \trans (s', r \mid s, a) \beta(s', o) (r - \bar r + \max_{o'} q(s', o')) \\
    & + \sum_{a} \pi(a \mid s, o) \sum_{s', r} \trans (s', r \mid s, a) (1 - \beta(s', o)) \left(r - \bar r +  q(s', o) \right)\\
    & = \sum_{a} \pi(a \mid s, o) \sum_{s', r} \trans (s', r \mid s, a) (r - \bar r + u^q_*(s', o))
\end{align*}
Therefore any solution of  \eqref{eq: option-value optimality equation} must be a solution of \eqref{eq: intra-option-value optimality equation}-\eqref{eq: u_*} and vice versa.

\subsection{Proof of \cref{lemma: s&f 1978 Theorem 4.1 c}}\label{proof of lemma: s&f 1978 Theorem 4.1 c}

\textbf{Part 1: \eqref{eq: state value optimality equation} $\implies$ \eqref{eq: lemma 1 eq}:}

Choose any solution of \eqref{eq: state value optimality equation}, $v_*$, and choose any $\opi \in \osetoptimalpolicies$. 
Using Theorem 3.1 (e) by S\&F,
\begin{align*}
    0 & \geq \oonestepr - \oonestepl \ooptimalr + \otransmatrix v_* - v_*\\
    (I - \otransmatrix) v_* & \geq \oonestepr - \oonestepl \ooptimalr
\end{align*}
(in weakly communicating SMDPs, the ``$L$'' set in Theorem 3.1 (e) is $\ospace$). 

Using the above inequality and Lemma 2.1 by S\&F, we have 
\begin{align*}
    (I - \olimitingmatrix)v_* & \geq \ofundmatrix (\oonestepr - \oonestepl \ooptimalr) \\
    v_* & \geq \ofundmatrix (\oonestepr - \oonestepl \ooptimalr) + \olimitingmatrix v_*
\end{align*}

Because $\opi$ can be any element in $\osetoptimalpolicies$,
\begin{align}
    v_*(s) \geq \sup_{\opi \in \osetoptimalpolicies} [\ofundmatrix (\oonestepr - \oonestepl \ooptimalr) + \olimitingmatrix v_*](s), \forall\ s \in \sspace. \label{eq: lemma 1 eq 1}
\end{align}

By Theorem 4.1 (c) in S\&F, there exists a deterministic optimal policy $\opi$, which is apparently an element of $\osetoptimalpolicies$ such that 
\begin{align*}
    v_* = \ofundmatrix (\oonestepr - \oonestepl \ooptimalr) + \olimitingmatrix v_*.
\end{align*}
This result, along with \eqref{eq: lemma 1 eq 1} shows that \begin{align*}
    v_*(s) = \max_{\opi \in \osetoptimalpolicies} [\ofundmatrix (\oonestepr - \oonestepl \ooptimalr) + \olimitingmatrix v_*](s), \forall\ s \in \sspace.
\end{align*} 


Part 1 is proven.

\textbf{Part 2: \eqref{eq: lemma 1 eq} $\implies$ \eqref{eq: state value optimality equation}:}

Conversely, if $v$ satisfies \eqref{eq: lemma 1 eq}, define
\begin{align}
    \tilde v(s) \doteq \max_{o} \sum_{s', r, l} \otrans(s', r, l \mid s, o) (r - l \ooptimalr + v(s')), \forall\ s \in \sspace. \label{eq: lemma 1 eq 2}
\end{align}
We first show that $\tilde v \geq v$. For any $\opi \in \osetoptimalpolicies$,
\begin{align*}
    \tilde v(s) & = \max_{o} \sum_{s', r, l} \otrans(s', r, l \mid s, o) (r - l \ooptimalr + v(s'))\\
    & \geq [\oonestepr - \oonestepl \ooptimalr + \otransmatrix v](s)  && \text{because of the ``$\max$''}\\
    & \geq [\oonestepr - \oonestepl \ooptimalr + \otransmatrix (\ofundmatrix (\oonestepr - \oonestepl \ooptimalr) + \olimitingmatrix v)](s) && \text{by \eqref{eq: lemma 1 eq}}\\
    & = [(I + \otransmatrix \ofundmatrix) (\oonestepr - \oonestepl \ooptimalr) + \olimitingmatrix v](s) && \text{rearranging terms}\\
    & = [(\ofundmatrix + \olimitingmatrix) (\oonestepr - \oonestepl \ooptimalr) + \olimitingmatrix v](s) && \text{by Equation 2.2 in S\&F}\\
    & = [\ofundmatrix(\oonestepr - \oonestepl \ooptimalr) + \olimitingmatrix v](s). && \text{by Theorem 3.1 (a) in S\&F}\\
    & = v(s) && \text{by \eqref{eq: lemma 1 eq}}
\end{align*}
We now show that $\tilde v \leq v$, which, together with $\tilde v \geq v$, implies $\tilde v = v$.

Let $\opi$ be a deterministic policy achieving all $\cardS$ maxima in \eqref{eq: lemma 1 eq 2}. Then we have
\begin{align}
    v & \leq \tilde v = \oonestepr - \oonestepl \ooptimalr + \otransmatrix v \label{eq: lemma 1 eq 2.1}\\
    (I - \otransmatrix) v & \leq \oonestepr - \oonestepl \ooptimalr \label{eq: lemma 1 eq 2.2}.
\end{align}
Multiplying both sides by $\olimitingmatrix \geq 0$, we have 
\begin{align*}
    \olimitingmatrix (I - \otransmatrix)v \leq \olimitingmatrix(\oonestepr - \oonestepl \ooptimalr).
\end{align*}
The l.h.s. is $0$ because $\olimitingmatrix \otransmatrix = \olimitingmatrix$. The r.h.s. $\leq 0$ because
\begin{align*}
    \olimitingmatrix(\oonestepr - \oonestepl \ooptimalr) \leq \olimitingmatrix(\oonestepr - \oonestepl \optionr (\opi)) = 0,
\end{align*}
where we use $\ooptimalr \geq \optionr (\opi), \forall\ \pi \in \osetpolicies$.
Combining the above two inequalities, we have
\begin{align}
    \olimitingmatrix(\oonestepr - \oonestepl \ooptimalr) = 0 \label{eq: lemma 1 eq 3}.
\end{align}

Now we need to use the following lemma, which is essentially the same as Lemma 2.1 by S\&F, except that the signs of inequalities are reversed. The proof follows the same arguments as those used in the proof of Lemma 2.1 by S\&F. 
\begin{lemma}
Fix any policy $\pi \in \osetpolicies$, suppose that $\olimitingmatrix b = 0$ and $(I - \otransmatrix) x - b \leq 0$, then $(I - \olimitingmatrix)x - \ofundmatrix b \leq 0$.
\end{lemma}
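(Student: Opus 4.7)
The plan is to follow the structure of S\&F's Lemma~2.1 with the inequality direction reversed throughout. I would introduce the auxiliary vector
\begin{equation*}
z \doteq \ofundmatrix b - (I - \olimitingmatrix) x,
\end{equation*}
so that the conclusion $(I - \olimitingmatrix) x - \ofundmatrix b \leq 0$ is equivalent to $z \geq 0$ componentwise, and the goal reduces to proving $z \geq 0$.

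The first step is to establish two identities for $z$. These rely on the standard algebraic relations $\otransmatrix \olimitingmatrix = \olimitingmatrix \otransmatrix = \olimitingmatrix^2 = \olimitingmatrix$, $\olimitingmatrix \ofundmatrix = \ofundmatrix \olimitingmatrix = \olimitingmatrix$, and $(I - \otransmatrix) \ofundmatrix = \ofundmatrix (I - \otransmatrix) = I - \olimitingmatrix$, all derivable from $\ofundmatrix = (I - \otransmatrix + \olimitingmatrix)^{-1}$ together with the Cesaro-limit form of $\olimitingmatrix$ in \eqref{eq: limiting matrix}. Combining them with the assumption $\olimitingmatrix b = 0$, a short calculation yields $(I - \otransmatrix) z = b - (I - \otransmatrix) x$, which is nonnegative by the hypothesis $(I - \otransmatrix) x - b \leq 0$, and separately $\olimitingmatrix z = 0$.

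The final step is to pass from $(I - \otransmatrix) z \geq 0$ and $\olimitingmatrix z = 0$ to $z \geq 0$. I would rewrite the first as $z \geq \otransmatrix z$ and, using that $\otransmatrix$ has non-negative entries, apply $\otransmatrix^k$ to both sides to obtain $\otransmatrix^k z \geq \otransmatrix^{k+1} z$ for every $k \geq 0$; iterating then gives $z \geq \otransmatrix^k z$ for every $k \geq 0$. Averaging over $k = 0, 1, \ldots, N - 1$ and sending $N \to \infty$ forces $z \geq \olimitingmatrix z = 0$ by the definition of $\olimitingmatrix$ in \eqref{eq: limiting matrix}.

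I do not anticipate a significant obstacle: the argument is essentially matrix bookkeeping that parallels S\&F's original proof, and the main thing to be careful about is the consistent reversal of inequality directions in each identity and in the final Cesaro-averaging step. The matrix identities above are purely algebraic and carry over from the multichain setting of S\&F to the present weakly communicating SMDP without modification.
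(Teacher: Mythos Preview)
Your proposal is correct and is exactly the approach the paper indicates: the paper does not spell out a proof but simply states that the argument follows S\&F's Lemma~2.1 verbatim with the inequality directions reversed, which is precisely what you have done. Your derivation of $(I-\otransmatrix)z\ge 0$ and $\olimitingmatrix z=0$ from the stated identities is right, and the Cesaro-averaging step to conclude $z\ge 0$ is the standard finish.
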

Let $x = v$, and $b = \oonestepr - \oonestepl \ooptimalr$, we see that $\olimitingmatrix b = 0$ because of \eqref{eq: lemma 1 eq 3} and $(I - \otransmatrix) x - b \leq 0$ because of \eqref{eq: lemma 1 eq 2.2}. Using the above lemma, we have 
\begin{align*}
    v \leq \ofundmatrix(\oonestepr - \oonestepl \ooptimalr) + \olimitingmatrix v.
\end{align*}
Inserting this inequality to \eqref{eq: lemma 1 eq 2.1}, we have
\begin{align*}
    \tilde v &= [\oonestepr - \oonestepl \ooptimalr + \otransmatrix v]\\
    & \leq (\oonestepr - \oonestepl \ooptimalr + \otransmatrix [\ofundmatrix(\oonestepr - \oonestepl \ooptimalr) + \olimitingmatrix v])\\
    & = [(I + \otransmatrix \ofundmatrix)(\oonestepr - \oonestepl \ooptimalr) + \olimitingmatrix v]\\
    & = [(\ofundmatrix + \olimitingmatrix)(\oonestepr - \oonestepl \ooptimalr) + \olimitingmatrix v] && \text{by Equation 2.2 in S\&F}\\
    & = [\ofundmatrix(\oonestepr - \oonestepl \ooptimalr) + \olimitingmatrix v] && \text{by Theorem 3.1 (a) in S\&F}\\
    & \leq \max_{\opi \in \osetoptimalpolicies}[\ofundmatrix ( \oonestepr - \oonestepl \ooptimalr) + \olimitingmatrix v]\\
    & = v && \text{because $v$ satisfies \eqref{eq: lemma 1 eq}}.
\end{align*}

Combining $\tilde v \geq v$ and $\tilde v \leq v$, we have $\tilde v = v$ and therefore $
v(s) = \tilde v(s) = \max_{o} \sum_{s', r, l} \otrans(s', r, l \mid s, o) (r - l \ooptimalr + v(s')), \quad \forall\ s \in \sspace.$ And thus $v$ is a solution of \eqref{eq: state value optimality equation}.

\subsection{Proof of \cref{lemma: s&f 1978 5.1 d}}\label{proof of lemma: s&f 1978 5.1 d}

Choose $\opi \in \osetpolicies_{**}$. Because $v, v+x$ are both solutions of \eqref{eq: state value optimality equation}, using part (e)(2) of Theorem 3.1 by S\&F, we have, $\forall\ s \in \ooptimalrecurrentstates$,
\begin{align*}
    v(s) &= [\oonestepr - \oonestepl \ooptimalr + P_{\opi} v](s)\\
    [v+x](s) &= [\oonestepr - \oonestepl \ooptimalr + P_{\opi} (v+x)](s).
\end{align*}
Subtracting, we have $x(s) = [P_{\opi} x](s)$. Iteratively applying this equation, we have 
\begin{align*}
    x(s) = [P^\infty_{\opi} x](s)= \langle d^\alpha_{\opi}, x\rangle, \forall\ s \in R_{*\alpha},
\end{align*}
where $d^\alpha_{\opi}$ is the unique stationary distribution of the $\alpha$-th recurrent class of $P_{\opi}$. This proves \eqref{eq: s&f 1978 5.1 1}.

Because $v+x$ is a solution of \eqref{eq: state value optimality equation}, by \cref{lemma: s&f 1978 Theorem 4.1 c}, $\forall\ s \in \sspace$,
\begin{align*}
    [v+x](s) &= \max_{\opi \in \osetoptimalpolicies} [\ofundmatrix(\oonestepr - \oonestepl \ooptimalr) + \olimitingmatrix [v+x]](s)\\
    x(s) & = \max_{\opi \in \osetoptimalpolicies} [\ofundmatrix(\oonestepr - \oonestepl \ooptimalr + \olimitingmatrix v - v) + \olimitingmatrix x](s) && \text{by Equation 2.2 in S\&F}\\
    & = \max_{\opi \in \osetoptimalpolicies} [\ofundmatrix b_{v, \opi} + \olimitingmatrix x](s)\\
    & = \max_{\opi \in \osetoptimalpolicies} [\ofundmatrix b_{v, \opi}](s) + \sum_{\beta = 1}^{\ooptimalnbrecurrentclasses} \left(\sum_{s' \in R_{*\beta}} \olimitingmatrix(s, s') \right) y_\beta .
\end{align*}

Using the above equality and \eqref{eq: s&f 1978 5.1 1}, \eqref{eq: s&f 1978 5.1 2} and \eqref{eq: s&f 1978 5.1 3} hold.

\subsection{Proof of Non-Convexity}\label{sec: non-convexity}
We now show that both $\calV_\infty$ and $\GDiffQsolutionq$ are not necessarily convex. We will show this point by constructing a counter-example, which involves the communicating MDP shown in the left sub-figure of \cref{fig:communicating_example}. The optimal reward rate for the MDP is $0$.

\begin{figure*}[h]
\centering
    \begin{subfigure}{\textwidth}
    \includegraphics[width=\textwidth]{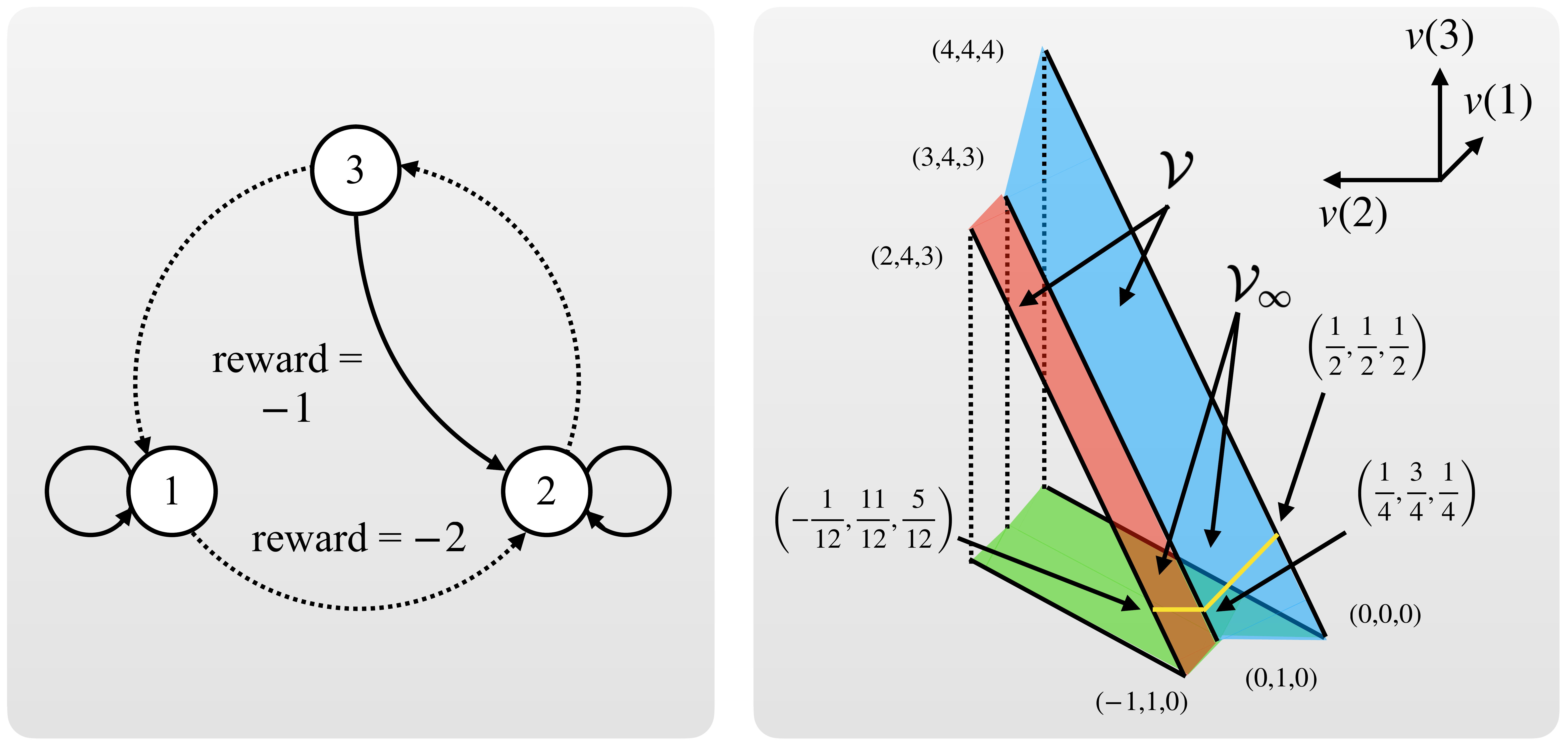}
    \end{subfigure}
    \caption{Illustration example. \emph{Left}: the example MDP. There are three states marked by three circles respectively. There are two actions \texttt{solid} and \texttt{dashed}, both have deterministic effects. Taking action \texttt{solid} at state \emph{3} results in a reward of $-1$. Taking action \texttt{dashed} at state \emph{1} results in a reward of $-2$. All other rewards are $0$.  \emph{Right}: a graphical explanation of $\calV, \calV_\infty$. The two yellow line segments together represent the solution set $\calV_\infty$, which is not convex. The red and blue regions together represent $\calV$.}
    \label{fig:communicating_example}
\end{figure*}


Let $f(q) = \sum_{s, a} q(s, a)$. Such a choice of $f$ satisfies the assumption on $f$ in \cref{thm: characterize Q}. Then by definition, for any $v \in \calV_\infty$, we have
\begin{align*}
    f(\tilde r + \tilde P v) = \ooptimalr.
\end{align*}
For the three-state MDP considered here, $\ooptimalr = 0$.
\begin{align*}
    f(\tilde r + \tilde P v) &= \sum_{s, a} r(s, a) + p(s' \mid s, a) v(s')\\
    &= -3 + 2v(1) + 3v(2) + v(3).
\end{align*}
Therefore,
\begin{align*}
    2v(1) + 3v(2) + v(3) = 3.
\end{align*}
In addition to the above equality, $\calV_\infty$ needs to satisfy the state-value optimality equation (\cref{eq: state value optimality equation}). Therefore for any $v \in \calV_\infty$,
\begin{align*}
    v(1) & = \max(v(1), -2 + v(2)),\\
    v(2) & = \max(v(2), v(3)),\\
    v(3) & = \max(v(1), -1 + v(2)),
\end{align*}
which implies
\begin{align*}
    v(1) & \geq -2 + v(2),\\
    v(2) & \geq v(3),\\
    v(3) & = \max(v(1), -1 + v(2)).
\end{align*}
Therefore 
\begin{align*}
    & \calV_\infty = \{v \in \bbR^3 \mid v(1) \geq -2 + v(2); v(2) \geq v(3); v(3) = \max(v(1), -1 + v(2)); \\
    & 2 v(1) + 3 v(2) + v(3) = 3\}.
\end{align*}
Graphically, $\calV_\infty$ corresponds to the two connected yellow line segments in the right sub-figure in \cref{fig:communicating_example}. From the figure, we see that $\calV_\infty$ is not convex. 

Let $s$ and $d$ denote \texttt{solid} and \texttt{dashed} respectively. Consider any $q \in \calQ_\infty$, in view of \eqref{eq: q solution determination},
\begin{align*}
    q(1, s) + q(1, d) + q(2, s) + q(2, d) + q(3, s) + q(3, d) = 0.
\end{align*}
In addition to the above equality, $\calQ_\infty$ needs to satisfy the action-value optimality equation (\cref{eq: action-value optimality equation}). Therefore for any $q \in \calQ_\infty$,
\begin{align*}
    q(1, s) & = 0 - 0 + \max(q(1, s), q(1, d))\\
    q(1, d) & = -2 - 0 + \max(q(2, s), q(2, d))\\
    q(2, s) & = 0 - 0 + \max(q(2, s), q(2, d))\\
    q(2, d) & = 0 - 0 + \max(q(3, s), q(3, d))\\
    q(3, s) & = -1 - 0 + \max(q(2, s), q(2, d))\\
    q(3, d) & = 0 - 0 + \max(q(1, s), q(1, d)),
\end{align*}
which implies
\begin{align*}
    q(1, s) & \geq q(1, d)\\
    q(1, d) & = -2 + \max(q(2, s), q(2, d))\\
    q(2, s) & \geq q(2, d)\\
    q(2, d) & = \max(q(3, s), q(3, d))\\
    q(3, s) & = -1 + \max(q(2, s), q(2, d))\\
    q(3, d) & = \max(q(1, s), q(1, d))\\
\end{align*}

Consider two solutions $q_1, q_2 \in \calQ_\infty$ defined as follows:
\begin{align*}
    q_1(1, s) &= \frac{1}{2}, q_1(1, d) = -\frac{3}{2},\\ 
    q_1(2, s) &= \frac{1}{2}, q_1(2, d) = \frac{1}{2},\\ 
    q_1(3, s) &= -\frac{1}{2}, q_1(3, d) = \frac{1}{2},\\
    q_2(1, s) &= -\frac{2}{3}, q_2(1, d) = -\frac{2}{3}, \\
    q_2(2, s) &= \frac{4}{3}, q_2(2, d) = \frac{1}{3}, \\
    q_2(3, s) &= \frac{1}{3}, q_2(3, d) = -\frac{2}{3} .
\end{align*}
The midpoint of $q_1$ and $q_2$, $\bar q \doteq 0.5 q_1 + 0.5 q_2$, satisfies
\begin{align*}
    \bar q(1, s) &= -\frac{1}{12} , \bar q(1, d) = -\frac{13}{12}, \\
    \bar q(2, s) &= \frac{11}{12} , \bar q(2, d) = \frac{5}{12}, \\
    \bar q(3, s) &= -\frac{1}{12} , \bar q(3, d) = -\frac{1}{12}.
\end{align*} 
Note that 
\begin{align*}
    \frac{5}{12} = \bar q(2, d) \neq \max(\bar q(3, s), \bar q(3, d)) = -\frac{1}{12}.
\end{align*}
Therefore $\bar q$ does not satisfy the action-value optimality equation (\eqref{eq: action-value optimality equation}) and $\bar q \not \in \GDiffQsolutionq$. Thus $\GDiffQsolutionq$ is not convex.

\subsection{General RVI Q} \label{sec: General RVI Q}

We now start to present the General RVI Q algorithm.

Let $\ispace \doteq \{1, 2, \cdots, k\}$ where $k$ is a positive integer. Consider solving $\bar r \in \bbR$ and $q \in \bbR^{\cardI}$ in following equation
\begin{align} \label{eq: General RVI Q Bellman equation}
    r(i) - \bar r + g(q)(i) - q(i) = 0, \forall\ i \in \ispace
\end{align}
where $r \in \bbR^\cardI$ is any fixed $\cardI$-dim vector, and $g: \bbR^{\cardI} \to \bbR^{\cardI}$ satisfies \cref{assu: g}.

We now consider an algorithm solving \eqref{eq: General RVI Q Bellman equation}. This algorithm maintains an $\cardI$-dim vector of estimates $Q \in \bbR^{\cardI}$, and updates $Q$ using
\begin{align}
    Q_{n+1}(i) &\doteq Q_n(i) + \alpha_{\nu(n, i)} \big( R_n(i) - F_n(Q_n) + G_n(Q_n)(i) - Q_n(i) + \epsilon_n(i) \big) I\{i \in Y_n\} \label{eq: General RVI Q async update},
\end{align}
where 
\begin{enumerate}
    \item $\{Y_n\}$ is the ``update schedule'' -- it is a set-valued process taking values in the set of nonempty subsets of $\ispace$ with the interpretation: $Y_n = \{i: i\textsuperscript{th}$ component of $Q$ was updated at time $n\}$,
    \item $\nu(n, i) \doteq \sum_{k=0}^n \bfI\{i \in Y_k\}$, where $\bfI$ is the indicator function (i.e., $\nu(n, i) =$ the number of times the $i$ component was updated up to step $n$)
    \item $\{\alpha_n\}$ is a step-size sequence,
    \item $\{R_n\}$ is a sequence of i.i.d. random vectors satisfying $\bbE \left [R_n \right] = r, \forall\ n = 0, 1, \dots$,
    \item for any $Q \in \bbR^{\cardI}$, $\{G_n(Q)\}$ is a sequence of i.i.d. random vectors satisfying $ \bbE[G_n(Q)(i)] = g(Q)(i), \forall\ i \in \ispace, n = 0, 1, \dots$, 
    \item for any $Q \in \bbR^{\cardI}$, $\{F_n(Q)\}$ is a sequence of i.i.d. random variables satisfying $ \bbE[F_n(Q)] = f(Q), \forall\ n = 0, 1, \dots$ where $f: \ispace \to \bbR$ is a function satisfying \cref{assu: f},
    \item $\epsilon_n$ is a sequence of random vectors of size $\cardI$. 
\end{enumerate}

We need the following assumptions in addition to Assumptions~\ref{assu: stepsize}--\ref{assu: f}.

\begin{assumption}\label{assu: g}
1) $g$ is a max-norm non-expansion, 2) $g$ is a span-norm non-expansion, 3) $g(x + ce) = g(x) + ce$ for any $c \in \bbR, x \in \bbR^{\cardI}$, 4) $g(cx) = cg(x)$ for any $c \in \bbR, x \in \bbR^{\cardI}$.
\end{assumption}

Let 
\begin{align} \label{eq: GRVIQ Martingale difference sequence}
    M_{n+1} \doteq R_n - r + G_n(Q_n) - g(Q_n) - (F_n(Q_n) - f(Q_n))e.
\end{align}
Let $\calF_n \doteq \sigma(Q_0, \epsilon_0, M_1, \epsilon_1,  M_2, \dots, \epsilon_{n-1}, M_n)$ be the increasing family of $\sigma$-fields. By the above construction, $\{M_{n+1}\}$ is a martingale difference sequence w.r.t. $\calF_n$. That is $\bbE[M_{n+1} \mid \calF_n] = 0$ a.s., $n \geq 0$.

\begin{assumption}\label{assu: variance of Martingale difference}
For $n \in \{0, 1, 2, \dots \}$, $\bbE [\norm{R_{n} - r}^2 \mid \calF_n] \leq K$, $\bbE [\norm{G_{n}(Q) - g(Q)}^2 \mid \calF_n] \leq K(1 + \norm{Q}^2)$ for any $Q \in \bbR^{\abs{\calI}}$, and $\bbE [\norm{F_{n}(Q) - f(Q)e}^2 \mid \calF_n] \leq K(1 + \norm{Q}^2)$ for any $Q \in \bbR^{\abs{\calI}}$ for a suitable constant $K > 0$.
\end{assumption}

We make the following assumption on $\epsilon_n$.
\begin{assumption}\label{assu: epsilon}  
$\bbE[\norm{\epsilon_n}^2 \mid \calF_n] < K(1 + \norm{Q_n}^2)$ a.s.. Further, $\epsilon_n$ converges to 0 a.s..
\end{assumption}






\begin{assumption}\label{assu: unique solution r}
Equation \ref{eq: General RVI Q Bellman equation} has a unique solution of $\bar r$. That is, there exists a pair $\bar r \in \bbR, q \in \bbR^\cardI$ satisfying \eqref{eq: General RVI Q Bellman equation} and if both $\bar r_1, q_1$ and $\bar r_2, q_2$ are solutions \eqref{eq: General RVI Q Bellman equation}, $\bar r_1 = \bar r_2$.
\end{assumption}

Denoted the unique solution of $\bar r$ by $\GRVIQsolutionrbar$. 

Define $\GRVIQsolutionq$ to be the set of $q \in \bbR^{\cardI}$ satisfying \eqref{eq: General RVI Q Bellman equation} and 
\begin{align}
    \GRVIQsolutionrbar & = f(q) \label{eq: General RVI q solution determination}.
\end{align}

\begin{assumption}\label{assu: solution q}
$\GRVIQsolutionq$ is non-empty, bounded, closed, and connected.
\end{assumption}

\begin{assumption}\label{assu: solution 0 reward}
If $r(i) = 0, \forall\ i \in \ispace$, then $0$ is the only element in $\GRVIQsolutionq$.
\end{assumption}

\Cref{assu: option assumption} is assumed in Section 2 of Wan et al. (2021b). Assumptions~\ref{assu: stepsize}--\ref{assu: epsilon} are the same as Assumptions A.1-A.7 by Wan et al. (2021b). Assumptions~\ref{assu: unique solution r}--\ref{assu: solution 0 reward} replace Assumption A.8 by Wan et al. (2021b).

\begin{theorem}\label{thm: General RVI Q}
Under Assumptions~\ref{assu: stepsize}-\ref{assu: solution 0 reward}, General RVI Q (\eqref{eq: General RVI Q async update}) converges, almost surely, $Q_n$ to $\GRVIQsolutionq$ and $f(Q_n)$ to $\GRVIQsolutionrbar$.
\end{theorem}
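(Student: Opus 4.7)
The plan is to recast General RVI Q as an asynchronous stochastic approximation scheme, invoke the Borkar--Meyn (2000) stability criterion together with Borkar's asynchronous convergence theorem (Borkar 2009, Chapter 7), and appeal to the structural Assumptions~\ref{assu: solution q}--\ref{assu: solution 0 reward} at precisely the two places where Wan et al.\ (2021b) used the uniqueness part of their Assumption A.8. Since Assumptions~\ref{assu: stepsize}--\ref{assu: epsilon} match those of Wan et al.\ (2021b) verbatim, most of their argument transfers unchanged; the task reduces to substituting the two new structural inputs into the right slots.

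First I would rewrite \eqref{eq: General RVI Q async update} in the canonical asynchronous SA form $Q_{n+1}(i) = Q_n(i) + \alpha_{\nu(n,i)}\,\bfI\{i \in Y_n\}\,(h(Q_n)(i) + M_{n+1}(i) + \epsilon_n(i))$, with drift $h(Q)(i) \doteq r(i) - f(Q) + g(Q)(i) - Q(i)$ and martingale noise $M_{n+1}$ from \eqref{eq: GRVIQ Martingale difference sequence}. The associated limiting ODE is $\dot Q = h(Q)$, and I would first identify its equilibrium set with $\GRVIQsolutionq$: any equilibrium $Q$ satisfies \eqref{eq: General RVI Q Bellman equation} with $\bar r = f(Q)$, and Assumption~\ref{assu: unique solution r} then forces $f(Q) = \GRVIQsolutionrbar$, i.e.\ $Q \in \GRVIQsolutionq$ by \eqref{eq: General RVI q solution determination}, while the converse inclusion is immediate.

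The central step is stability of $\{Q_n\}$, which I would obtain from Borkar--Meyn. Using the positive homogeneity $g(cx) = cg(x)$ and $f(cx) = cf(x)$ from Assumptions~\ref{assu: f}(3) and \ref{assu: g}(4), the scaled drifts $h_c(Q) \doteq h(cQ)/c = r/c - f(Q)e + g(Q) - Q$ converge uniformly on compact sets to $h_\infty(Q) \doteq -f(Q)e + g(Q) - Q$. The equilibria of $h_\infty$ are precisely the solutions of \eqref{eq: General RVI Q Bellman equation}--\eqref{eq: General RVI q solution determination} for the auxiliary problem with $r \equiv 0$, and Assumption~\ref{assu: solution 0 reward} pins this set at $\{0\}$. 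To upgrade this to global asymptotic stability of the origin, I would use the span semi-norm $\mathrm{sp}(Q) \doteq \max_i Q(i) - \min_i Q(i)$ as a Lyapunov function: $\mathrm{sp}(\cdot)$ is non-increasing along trajectories because $g$ is a span non-expansion (Assumption~\ref{assu: g}(2)) and the shift $-f(Q)e$ is span-invariant; once the trajectory reaches the span-zero shift class $\{ce : c \in \mathbb{R}\}$, the relation $f(ce) = cu$ with $u > 0$ (Assumption~\ref{assu: f}(2)) together with $h_\infty(ce) = -cue$ drives $c$ to $0$. Borkar--Meyn then delivers $\sup_n \|Q_n\| < \infty$ almost surely.

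Given stability, Borkar's asynchronous convergence theorem (Borkar 2009, Chapter 7, Theorem 2) implies that $\{Q_n\}$ converges almost surely to some (sample-path dependent) internally chain transitive invariant set of $\dot Q = h(Q)$. To close the argument I would show every such ICT set is contained in $\GRVIQsolutionq$, using $V(Q) \doteq \mathrm{sp}(Q - Q^\star)$ for an arbitrary fixed $Q^\star \in \GRVIQsolutionq$ as a Lyapunov function on the true ODE: $V$ is non-increasing because $g$ is span non-expansive and because $f(Q)e - f(Q^\star)e$ is span-invariant, while the one-degree-of-freedom pinning through $f$ again rules out drift along the shift orbit. Invariance plus chain recurrence then forces $V$ to be constant on any ICT set, confining it to $\GRVIQsolutionq$. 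Assumption~\ref{assu: solution q} guarantees that $\GRVIQsolutionq$ is non-empty, closed, bounded and connected, so we conclude $d(Q_n, \GRVIQsolutionq) \to 0$ almost surely; the Lipschitz continuity of $f$ and the identity $f \equiv \GRVIQsolutionrbar$ on $\GRVIQsolutionq$ then yield $f(Q_n) \to \GRVIQsolutionrbar$.

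The main obstacle I anticipate is the strict-descent piece of the final Lyapunov argument: showing that $V$ decreases strictly off $\GRVIQsolutionq$ when multiple degrees of freedom are present requires a careful contraction argument on the quotient space $\mathbb{R}^{|\mathcal I|}/\langle e\rangle$ to rule out cycling among non-equilibrium points that only share the same span. All remaining ingredients --- identification of equilibria, verification of the noise and step-size conditions, and invocation of Borkar's theorems --- are routine once Assumptions~\ref{assu: solution q} and \ref{assu: solution 0 reward} are plugged into the two slots identified above.
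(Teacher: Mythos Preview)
Your overall architecture matches the paper's: recast the update as asynchronous SA, identify the equilibrium set of $\dot Q = h(Q)$ with $\GRVIQsolutionq$ via Assumption~\ref{assu: unique solution r}, invoke Borkar--Meyn for stability with Assumption~\ref{assu: solution 0 reward} supplying the origin as the unique globally asymptotically stable equilibrium of $\dot x = h_\infty(x)$, and then use Borkar's theorem to conclude convergence to an internally chain transitive invariant set. The paper likewise defers the non-expansion and noise verifications to Wan et al.\ (2021b), exactly as you propose.

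The one substantive divergence is in how the ICT set is confined to $\GRVIQsolutionq$. The paper does \emph{not} build a Lyapunov function vanishing on $\GRVIQsolutionq$; instead it argues in two pieces: (i) $\GRVIQsolutionq$ is itself a compact connected internally chain transitive invariant set --- compactness and connectedness are read off Assumption~\ref{assu: solution q}, and chain transitivity follows because a connected set consisting entirely of equilibria is automatically chain transitive; (ii) any set containing a point $x \notin \GRVIQsolutionq$ fails to be ICT, because such $x$ is ``transient'' for the ODE (the trajectory leaves every neighbourhood of $x$ permanently), a claim the paper inherits from Wan et al.'s Lemma~A.4 machinery based on the max-norm non-expansion of $T_1$ together with the scalar dynamics in the $e$-direction.

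Your Lyapunov candidate $V(Q) = \mathrm{sp}(Q - Q^\star)$ runs into precisely the difficulty you anticipate, and it is structural rather than technical: when $\GRVIQsolutionq$ has more than one degree of freedom there exist equilibria $Q^{\star\star} \in \GRVIQsolutionq$ not on the shift orbit of $Q^\star$, so $V(Q^{\star\star}) > 0$. Hence $V$ cannot be strictly decreasing off $\GRVIQsolutionq$ --- it is constant (and positive) at other equilibria --- and the ``$V$ constant on ICT sets $\Rightarrow$ contained in $\GRVIQsolutionq$'' implication breaks. The paper's route sidesteps this by never trying to realise $\GRVIQsolutionq$ as the zero level of a single function; it only needs that trajectories started outside $\GRVIQsolutionq$ do not recur near their starting point, and it uses the connectedness clause of Assumption~\ref{assu: solution q} to certify that $\GRVIQsolutionq$ itself is an admissible limit. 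Replacing your span-Lyapunov step with this two-part ICT characterisation closes the gap you flagged.
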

\begin{proof}
The proof would by a large degree repeats that of Theorem A.1 by Wan et al. (2021). For simplicity, we only highlight modifications.

Both our proof and the proof of Theorem A.1 study two ordinary differential equations (ODEs):
\begin{align}
    \dot y_t & \doteq T_1 (y_t) - y_t , \label{eq: aux ode}\\
    \dot x_t & \doteq T_2 (x_t) - x_t = T_1(x_t) - x_t +  \left ( \GRVIQsolutionrbar - f(x_t) \right)e, \label{eq: original ode}
\end{align}
where
\begin{align*}
    T_1 (Q)(i) & \doteq r(i) + g(Q)(i) - \GRVIQsolutionrbar, \\
    T_2 (Q)(i) & \doteq r(i) + g(Q)(i) - f(Q)\\
    & = T_1 (Q)(i) +  \left (\GRVIQsolutionrbar - f(Q) \right).
\end{align*}



Lemmas A.1 and A.3 by Wan et al. (2021) and their proofs still hold (by replacing $r_\infty$ with $\GRVIQsolutionrbar$).
Lemma A.2 by Wan et al. (2021) is replaced by the following one. The proof follows the same arguments as those for Lemma A.2.
\begin{lemma} \label{lemma: unique equilibrium}
The set of equilibrium points of \eqref{eq: original ode} is $\GRVIQsolutionq$.
\end{lemma}


Lemma A.4 by Wan et al. (2021) is replaced with the following two lemmas.
\begin{lemma} \label{lemma: globally asymptotically stable equilibrium lemma}
If $r(i) = 0, \forall\ i \in \ispace$, $0$ is the globally asymptotically stable equilibrium for \eqref{eq: original ode}.
\end{lemma}

\begin{proof}
We have assumed that $0$ is the only element in $\GRVIQsolutionq$ in Assumption \ref{assu: solution 0 reward} and $0$ is thus the unique equilibrium of \eqref{eq: original ode}. The rest follows the proof of Lemma A.4 by Wan et al. (2021) by replacing $q_\infty$ with $0$.
\end{proof}

\begin{lemma} \label{lemma: Q is internally chain transitive invariant}
$\GRVIQsolutionq$ is a compact connected internally chain transitive invariant set for the ODE \eqref{eq: original ode}. Furthermore, any set that contains points not in $\GRVIQsolutionq$ is not a compact connected internally chain transitive invariant set for the ODE \eqref{eq: original ode}.
\end{lemma}
\begin{proof}
According to Assumption \ref{assu: solution q}, $\GRVIQsolutionq$ is closed and bounded and is thus compact. Assumption \ref{assu: solution q} also assumes that $\GRVIQsolutionq$ is connected. $\GRVIQsolutionq$ is an internally chain transitive invariant set for the ODE \eqref{eq: original ode} because every element in $\GRVIQsolutionq$ is an equilibrium point of the ODE and $\GRVIQsolutionq$ is connected.

If a set contains a point $x \in \bbR^{|\calI|}$ that is not in $\GRVIQsolutionq$, this set can not be internally chain transitive because $x$ is "transient" and the trajectory of \eqref{eq: original ode} can not be arbitrarily close to $x$ at some arbitrarily large time step.
\end{proof}

The last step is to show convergence of a synchronous version of \eqref{eq: General RVI Q async update}
\begin{align}
    Q_{n+1}(i) &\doteq Q_n(i) + \alpha_{\nu(n, i)} \big( R_n(i) - F_n(Q_n) + G_n(Q_n)(i) - Q_n(i) + \epsilon_n(i) \big). \label{eq: General RVI Q sync update}
\end{align}
This result, together with Assumptions \ref{assu: asynchronous stepsize 1} and \ref{assu: asynchronous stepsize 2}, guarantees convergence of the asynchronous update (\eqref{eq: General RVI Q sync update}) by applying results from Section 7.4 by Borkar (2009)

\begin{lemma}\label{lemma: General RVI Q synchronous update convergence}
Equation~\ref{eq: General RVI Q sync update} converges a.s. $Q_{n}$ to $\GRVIQsolutionq$ as $n \to \infty$.
\end{lemma}

\begin{proof}
The proof essentially follows that of Lemma A.5 with two changes. First, using Lemma \ref{lemma: globally asymptotically stable equilibrium lemma} we can show that
the ODE $\dot x_t = h_\infty(x_t) = g (x_t) - f(x_t) e - x_t$ has the origin as the unique globally asymptotically stable equilibrium. Second, the proof of Lemma A.5 uses Borkar’s (2009) Theorem 2, which proves that $Q_n$ converges to a (possibly
sample path dependent) compact connected internally chain transitive invariant set of $\dot x_t = h(x_t)$ where 
\begin{align*}
    h(Q_n)(i) & \doteq r(i) - f(Q_n) + g(Q_n)(i) - Q_n(i).
\end{align*}
Lemma \ref{lemma: Q is internally chain transitive invariant} and Theorem 2 by Borkar (2009) together imply that $Q_n$ must converge to $\GRVIQsolutionq$.
\end{proof}
\end{proof}

\subsection{Verifying \cref{assu: unique solution r}--\cref{assu: solution 0 reward}} \label{sec: Verifying assumptions}
When casting General RVI Q to RVI Q-learning, \eqref{eq: General RVI Q Bellman equation} becomes \eqref{eq: action-value optimality equation}.  It is then clear that \cref{assu: unique solution r} satisfies because the associated MDP is communicating, \cref{assu: solution q} holds because of \cref{thm: characterize Q} and \cref{assu: solution 0 reward} holds because of \cref{lemma: 0 reward MDP has unique solution}.

When casting General RVI Q to Differential Q-learning, \eqref{eq: General RVI Q Bellman equation} becomes \eqref{eq: action-value optimality equation} for an MDP with the same transition dynamics as the original one and all rewards being shifted by a constant that depends on initial action-value estimate $Q_0$ and reward rate estimate $\bar R_0$. It is clear that this new MDP, just like the original one, is communicating. Therefore \cref{thm: characterize Q} and \cref{lemma: 0 reward MDP has unique solution} hold and thus \cref{assu: unique solution r}--\cref{assu: solution 0 reward} hold.

When casting General RVI Q to inter-option Differential Q-learning, \eqref{eq: General RVI Q Bellman equation} becomes \eqref{eq: option-value optimality equation} for an SMDP with the same transition dynamics as the original one and the reward of each state-option pair being shifted in proportional to its expected duration. Again the resulting SMDP is communicating and therefore \cref{thm: characterize Q} and \cref{lemma: 0 reward MDP has unique solution} hold and thus \cref{assu: unique solution r}--\cref{assu: solution 0 reward} hold.

When casting General RVI Q to intra-option Differential Q-learning, \eqref{eq: General RVI Q Bellman equation} becomes \eqref{eq: intra-option-value optimality equation}, which is the same as \eqref{eq: option-value optimality equation} by \cref{prop: inter = intra equations} for the reward shifted SMDP introduced in the previous paragraph. 

\subsection{Convergence of Reward Rates of Greedy Policies}\label{sec: convergence of r(pi_t)}
\begin{lemma}
Assume that the SMDP is weakly communicating, suppose $Q_n$ converges to $\GDiffQsolutionq$ almost surely, let $\opi_n$ be a greedy policy w.r.t. $Q_n$, $r(\opi_n, s) \to \ooptimalr$ almost surely.
\end{lemma}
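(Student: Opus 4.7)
The plan is to argue by contradiction using compactness of $\GDiffQsolutionq$ (from \cref{thm: characterize Q}) together with the finiteness of deterministic stationary meta-policies. Throughout I work on the probability-one event on which $d(Q_n, \GDiffQsolutionq) \to 0$. Because $\GDiffQsolutionq$ is bounded by \cref{thm: characterize Q}, the sequence $\{Q_n\}$ itself is bounded on this event, and without loss of generality I may take $\opi_n$ to be a deterministic meta-policy since there are only finitely many possible ``tie-breaking'' choices.

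The key preparatory observation is that for each deterministic stationary meta-policy $\opi$, the set
\begin{align*}
A(\opi) \doteq \{Q \in \bbR^{\cardS \cardO}: Q(s, \opi(s)) \geq Q(s, o) \; \forall\, s \in \sspace,\, o \in \ospace\}
\end{align*}
is closed as a finite intersection of closed half-spaces, and there are only finitely many such $\opi$. Combined with the fact stated just after \eqref{eq: option-value optimality equation} --- any greedy policy with respect to any solution of $q$ in \eqref{eq: option-value optimality equation} is an optimal policy --- this gives the mechanism for turning a limiting argument into optimality.

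Now suppose, toward contradiction, that on a set of positive probability there exist $s_0 \in \sspace$ and $\epsilon > 0$ with $r(\opi_n, s_0) \leq \ooptimalr - \epsilon$ for infinitely many $n$. By finiteness of the set of deterministic meta-policies, some fixed non-optimal policy $\opi^{*}$ (satisfying $r(\opi^{*}, s_0) < \ooptimalr$) must equal $\opi_n$ infinitely often along such a sample path; extract the subsequence $\{n_k\}$ with $\opi_{n_k} = \opi^{*}$. Since $\{Q_{n_k}\}$ is bounded, extract a further convergent subsequence $Q_{n_{k_j}} \to q^{*}$. Closedness of $A(\opi^{*})$ gives $q^{*} \in A(\opi^{*})$, and closedness of $\GDiffQsolutionq$ together with $d(Q_n, \GDiffQsolutionq) \to 0$ gives $q^{*} \in \GDiffQsolutionq$.

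Because $q^{*} \in \GDiffQsolutionq$ solves \eqref{eq: option-value optimality equation} and $\opi^{*}$ is greedy with respect to $q^{*}$, the cited classical result implies $\opi^{*}$ is optimal, contradicting the choice of $\opi^{*}$. Hence, almost surely, $r(\opi_n, s) \to \ooptimalr$ for every $s \in \sspace$. The only real obstacle is the bookkeeping involved in extracting the subsequence and keeping track of sample paths; all the substantive work is already done by boundedness in \cref{thm: characterize Q} and by the standard correspondence between solutions of the optimality equation and optimal policies.
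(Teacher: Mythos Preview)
Your argument is sound for sequences of \emph{deterministic} greedy policies and takes a genuinely different route from the paper. The paper's proof is quantitative: it first establishes the elementary inequality $\min_i a(i)/b(i) \le (p^\top a)/(p^\top b) \le \max_i a(i)/b(i)$ for positive $b$, then writes $r(\opi_n, s, o) = (P_{\opi_n}^\infty r)(s,o)/(P_{\opi_n}^\infty l)(s,o)$ via the limiting matrix and sandwiches it as
\[
\min_{s',o'} \frac{(TQ_n - Q_n)(s',o')}{l(s',o')} \;\le\; r(\opi_n, s, o) \;\le\; \ooptimalr \;\le\; \max_{s',o'} \frac{(TQ_n - Q_n)(s',o')}{l(s',o')},
\]
so that $\ooptimalr - r(\opi_n, s, o) \le sp\bigl((TQ_n - Q_n)/l\bigr)$, which tends to zero by continuity since $(Tq - q)/l \equiv \ooptimalr e$ on $\GDiffQsolutionq$. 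This handles any (possibly stochastic) greedy $\opi_n$ uniformly and yields an explicit error bound. Your compactness/pigeonhole route is more elementary --- it avoids the limiting-matrix machinery and the ratio lemma entirely, relying only on the boundedness and closedness of $\GDiffQsolutionq$ from \cref{thm: characterize Q} together with closedness of the half-space sets $A(\opi)$. The trade-off is that it is purely qualitative.

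One step does need tightening: the claim ``without loss of generality $\opi_n$ is deterministic'' is not justified. If the lemma is to cover stochastic greedy policies, note that $r(\cdot, s)$ is not continuous in the policy, so you cannot pass to a policy limit along a convergent subsequence of stochastic $\opi_n$, and the reward rate of a stochastic greedy policy is not a convex combination of the rates of the deterministic ones. A clean fix within your framework is to pigeonhole on the argmax \emph{profile} $G_n = \bigl(\argmax_o Q_n(s,o)\bigr)_{s \in \sspace}$ rather than on the policy: there are finitely many such profiles, so some fixed $G^*$ recurs infinitely often along the offending subsequence; along a further convergent subsequence $Q_{n_{k_j}} \to q^* \in \GDiffQsolutionq$ one gets $G^*(s) \subseteq \argmax_o q^*(s,o)$ for every $s$. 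Hence each $\opi_{n_{k_j}}$, being supported on $G^*$, is itself greedy with respect to $q^*$ and therefore optimal --- contradicting $r(\opi_{n_{k_j}}, s_0) \le \ooptimalr - \epsilon$.
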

\begin{proof}
We will need the following lemma for the proof.
\begin{lemma}\label{lemma: smdp lemma r pi_t convergence}
For any $m \in \{1, 2, 3, \ldots\}$,  $a, b \in \bbR^m$, $b > 0$, and for any $p \in \bbR^m$ such that $p \geq 0$, $\sum_s p(s) = 1$,
\begin{align*}
    \frac{p^\top a}{p^\top b}& \geq \min_s \frac{a(s)}{b(s)}, \\
    \frac{p^\top a}{p^\top b}& \leq \max_s \frac{a(s)}{b(s)}.
\end{align*}
\end{lemma}
\begin{proof}
For any $s \in \{1, 2, 3, \ldots, m\}$
\begin{align*}
    \frac{a(s)}{b(s)} &\geq \min_{s'} \frac{a(s')}{b(s')}\\
    a(s)  &\geq b(s) \min_{s'} \frac{a(s')}{b(s')}\\
    p(s) a(s)  &\geq p(s) b(s) \min_{s'} \frac{a(s')}{b(s')}\\
\end{align*}
Therefore 
\begin{align*}
    p^\top a  &\geq p^\top b \min_{s'} \frac{a(s')}{b(s')}\\
\end{align*}
By our assumptions on $b$ and $p$, $p^\top b > 0$, we have 
\begin{align*}
    \frac{p^\top a}{p^\top b}  &\geq  \min_{s'} \frac{a(s')}{b(s')}.
\end{align*}

$\frac{p^\top a}{p^\top b} \leq \max_s \frac{a(s)}{b(s)}$ can be shown in the same way.
\end{proof}

Given that $Q_n$ converges to $\GDiffQsolutionq$, consider $\optionr(\opi_n, s)$ where $\opi_n$ is a greedy policy w.r.t. $Q_n$. We show that $\optionr(\opi_n, s)$ converges to $\ooptimalr$ for all $s \in \sspace$.

For any $\opi \in \osetpolicies$, let $\otransmatrix$ denote the $\cardS \times \cardO \times \cardS \times \cardO$ transition probability matrix under policy $\opi$. That is,
\begin{align}
    \otransmatrix (s, o, s', o') \doteq \sum_{r, l} \otrans (s', r, l \mid s, o) \opi(o' \mid s').
\end{align}

Let $\olimitingmatrix$ be the \emph{limiting matrix} of $\otransmatrix$, which is the Cesaro limit of the sequence $\{\otransmatrix^i\}_{i = 1}^\infty$:
\begin{align*}
    \olimitingmatrix \doteq \lim_{n \to \infty} \frac{1}{n} \sum_{i = 0}^{n-1} \otransmatrix^i.
\end{align*}
Because $\sspace$ is finite, the Cesaro limit exists and $\olimitingmatrix$ is a stochastic matrix (has row sums equal to 1). 

Let $\optionr (s, o) \doteq \sum_{s', r, l} \otrans (s', r, l \mid s, o) r$ be the one-stage option reward and $\optionl (s, o) \doteq \sum_{s', r, l} \otrans (s', r, l \mid s, o) l$ be the one-stage option length. Let $\optionr (\opi, s, o) \doteq \sum_{s', r, l} \otrans(s', r, l \mid s, o) \optionr (\opi, s')$ be the reward rate of policy $\opi$ starting from $s, o$.  By part (a) of Theorem 11.4.1 in Puterman (1994),
\begin{align*}
    \optionr(\opi_n, s, o) = \frac{P_{\opi_n}^\infty \optionr (s, o)}{P_{\opi_n}^\infty \optionl (s, o)}.
\end{align*}
Thus we have,
\begin{align*}
    \optionr(\opi_n, s, o) & = \frac{P_{\opi_n}^\infty \optionr (s, o)}{P_{\opi_n}^\infty \optionl (s, o)}\\
    & = \frac{P_{\opi_n}^\infty (\optionr + P_{\opi_n} Q_n - Q_n) (s, o)}{P_{\opi_n}^\infty \optionl (s, o)} \\
    & \geq \min_{s', o'}  \frac{(\optionr + P_{\opi_n} Q_n - Q_n) (s', o')}{ \optionl (s', o')} \\
    & = \min_{s', o'} \frac{(T Q_n - Q_n)(s', o')}{ \optionl(s', o')},
\end{align*}
where the inequality holds because of \cref{lemma: smdp lemma r pi_t convergence}, and $TQ_n (s, o) \doteq \optionr(s, o) + \sum_{s', r, l} \otrans(s', r, l \mid s, o) \max_{o'} Q_n(s', o')$.

Because the SMDP is weakly communicating, consider a deterministic optimal policy $\opistar \in \osetoptimalpolicies$, $\optionr (\opistar, s) = \ooptimalr, \forall\ s \in \sspace$. Therefore $\optionr (\opistar, s, o) = \ooptimalr$.  Now we have, for any $s, o$,
\begin{align*}
    \ooptimalr & = \optionr(\opistar, s, o) \\
    & = \frac{\olimitingmatrixstar  \optionr (s, o)}{\olimitingmatrixstar  \optionl (s, o)}\\
    & = \frac{\olimitingmatrixstar ( \optionr + P_{\opistar} Q_n - Q_n) (s, o)}{\olimitingmatrixstar  \optionl (s, o)} \\
    & \leq \max_{s', o'} \frac{ ( \optionr + P_{\opistar} Q_n - Q_n) (s', o')}{  \optionl (s', o')} \\
    & \leq \max_{s', o'} \frac{ ( \optionr + P_{\opi_n} Q_n - Q_n) (s', o')}{ \optionl (s', o')}\\
    & = \max_{s', o'} \frac{(T Q_n - Q_n)(s', o')}{ \optionl(s', o')}.
\end{align*}
The first inequality holds because of \cref{lemma: smdp lemma r pi_t convergence} and the second inequality holds because $\opi_n$ is a greedy policy w.r.t. $Q_n$.

With the above results, and that $\ooptimalr \geq \optionr(\opi_n, s, o)$, we have, for any $s, o$,
\begin{align*}
    \min_{s', o'} \frac{(T Q_n - Q_n)(s', o')}{ \optionl(s', o')} \leq \optionr(\opi_n, s, o) \leq \ooptimalr \leq \max_{s', o'} \frac{(T Q_n - Q_n)(s', o')}{ \optionl(s', o')}
\end{align*}

Therefore,
\begin{align*}
    & \max_{s, o} \ooptimalr - \optionr(\opi_n, s, o) \leq sp\left (\frac{T Q_n - Q_n}{ \optionl} \right),
\end{align*}
where $sp(x) = \max_i x(i) - \min x(i)$ denotes the span of vector $x$.

Because $Q_n \to \GDiffQsolutionq$ a.s., every point $q$ in $\GDiffQsolutionq$ satisfies $sp((T q - q)/\optionl) = 0$ because $(T q - q)/\optionl = \ooptimalr e$ by \eqref{eq: option-value optimality equation}. In addition, $sp((TQ_n - Q_n)/\optionl)$ is a continuous function of $Q_n$, by continuous mapping theorem, $sp((TQ_n - Q_n)/\optionl) \to 0$ a.s.. Therefore we conclude that $\optionr(\opi_n, s, o) \to \ooptimalr, \forall s, o$. By definition, $\optionr(\opi_n, s) = \sum_{o} \opi_n(o \mid s) \optionr(\opi_n, s, o)$. Therefore $\optionr(\opi_n, s) \to \ooptimalr$
\end{proof}

\section{Empirical Results} \label{sec: empirical validation}
In this section, we empirically verify our convergence results of Differential Q-learning and RVI Q-learning by showing the dynamics of estimated action values of the two algorithms in a communicating MDP and a weakly communicating MDP.

We first consider the communicating MDP shown at the bottom of \cref{fig:different mdps}. For this MDP, we apply Differential Q-learning with initial action values $0$, initial reward rate estimate $-3$, and $\eta = 1$. 
The behavior policy chooses action \texttt{solid} with probability $0.8$ and action \texttt{dashed} with probability $0.2$ for both two states. The stepsize is $0.1$. We performed 10 runs for each algorithm. Each run starts from state 1 and lasts for $1000$ steps. Every 10 steps, we recorded the estimated action values and plotted the higher action value for each state in the figure. \Cref{fig: empirical validation communicating}(left) shows the evolution of these action values. In the right panel of the same figure, we also show it using RVI Q-learning with action values being initialized with $0$ and $f(q) = q(1, \texttt{dashed})$. A more detailed explanation is provided in the figure's caption. It can be seen that for both algorithms, 1) for each run, the estimated action-value function converged to a point in the solution set (the black line segments), and 2) for different runs, the estimated action values generally converged to different points in the solution set.

\begin{wrapfigure}{r}{0.5\textwidth}
    \centering
    \includegraphics[width=0.5\textwidth]{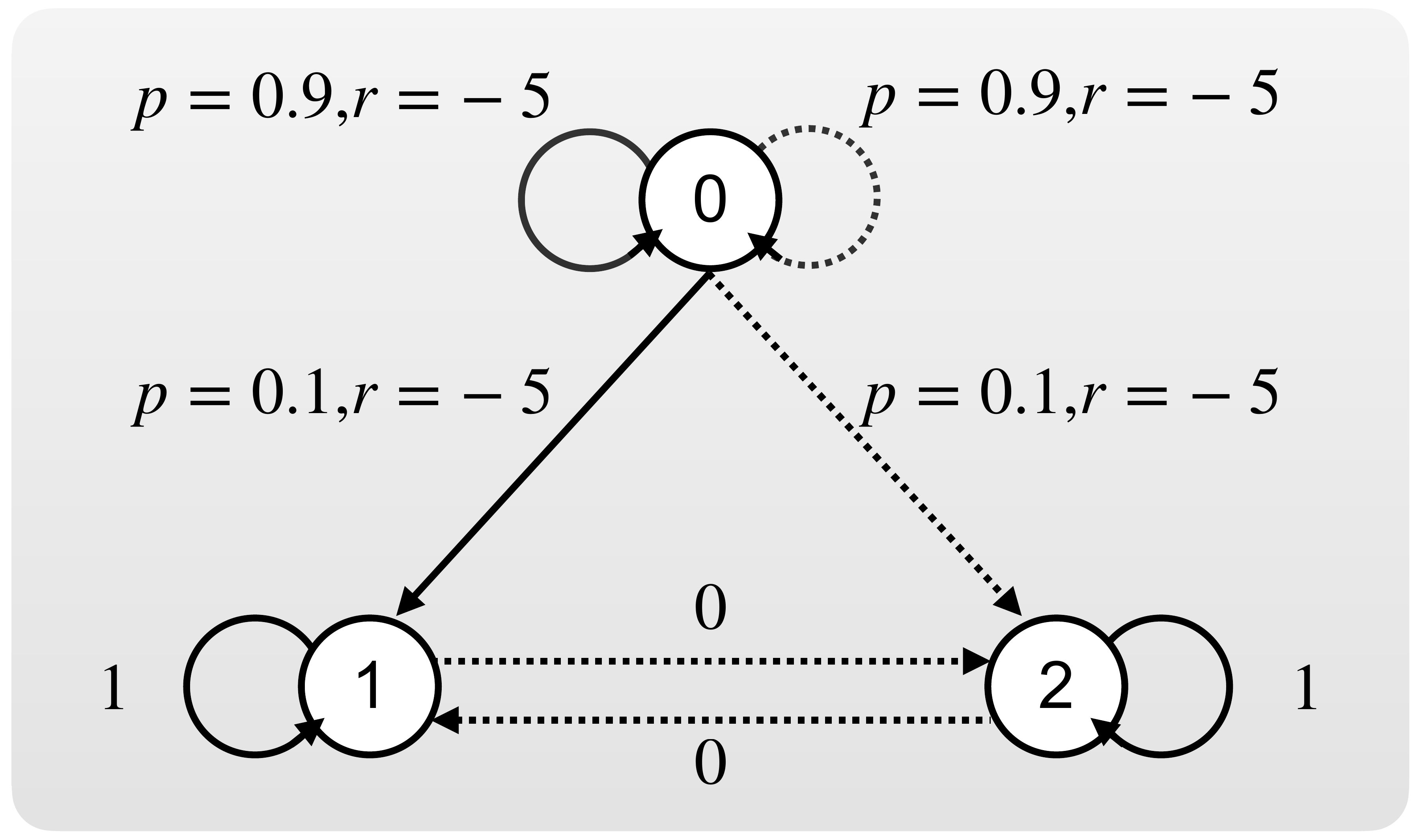}
\caption{A weakly communicating MDP modified from the communicating MDP shown at the bottom of \cref{fig:different mdps} by adding a transient state $0$.}
\label{fig: wc mdp experiment}
\end{wrapfigure}

\begin{figure*}[h]
\centering
    \begin{subfigure}{\textwidth}
    \includegraphics[width=\textwidth]{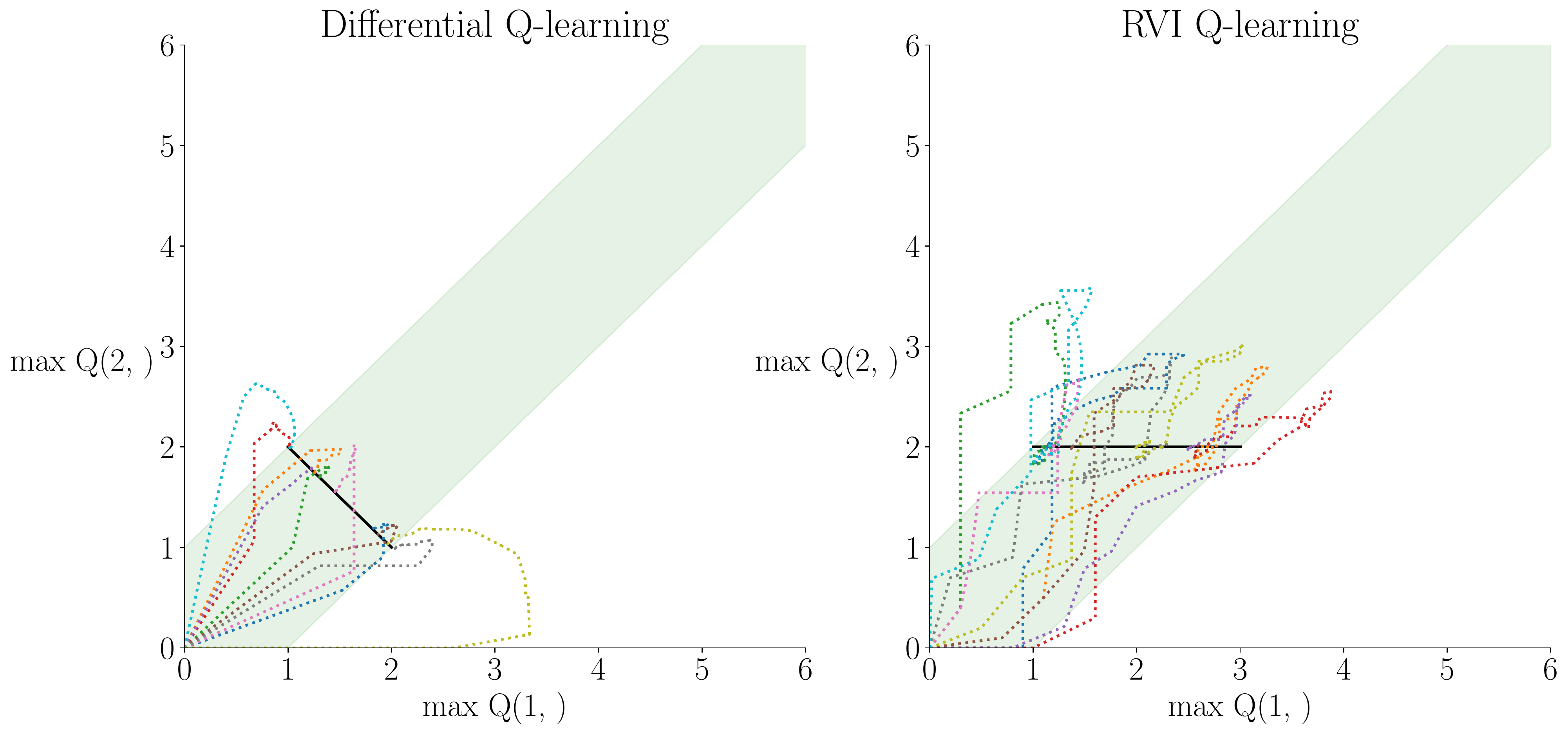}
    \end{subfigure}
    \caption{Evolution of estimated action values of Differential Q-learning and RVI Q-learning in the communicating MDP shown at the bottom of \cref{fig:different mdps}. In each figure, the $x$ and $y$ axes are the higher estimated action value at state $1$ and $2$, respectively. The light green region marks the solution set of the action-value optimality equation (\cref{eq: action-value optimality equation}), and the black line segment marks the solution set $\GDiffQsolutionq$. Each colored dotted trajectory marks the evolution of the estimated action values. Each trajectory starts from zero point and ends at some point on the black line segment.}
    \label{fig: empirical validation communicating}
\end{figure*}

We also applied both of the two algorithms, with the same parameter settings and initialization in a weakly communicating MDP (\cref{fig: wc mdp experiment}), which is just the communicating MDP plus a transient state. In the transient state, taking both \texttt{solid} and \texttt{dashed} actions stays at the transient state with probability $0.9$. The MDP moves to state 1 with probability $0.1$ given action \texttt{solid} and to state 2 with probability $0.1$ given action \texttt{dashed}. The reward starting from state 0 is always $-5$. The starting state is $0$. Because the agent could spend different amounts of time in the transient state for different runs, the agent may enter the communicating set, which contains states 1 and 2, with different action values associated with state $0$. 

The solution set of Differential Q-learning depends on the action values associated with the transient states when entering the communicating class. Therefore in the figure, the points that the estimated action-value function converged to, corresponding to different runs, are not in a line. Nevertheless, the estimated action-value function in all runs converged to the green region, which corresponds to the solution set of the action-value optimality equation.
\begin{figure*}[h]
\centering
    \begin{subfigure}{\textwidth}
    \includegraphics[width=\textwidth]{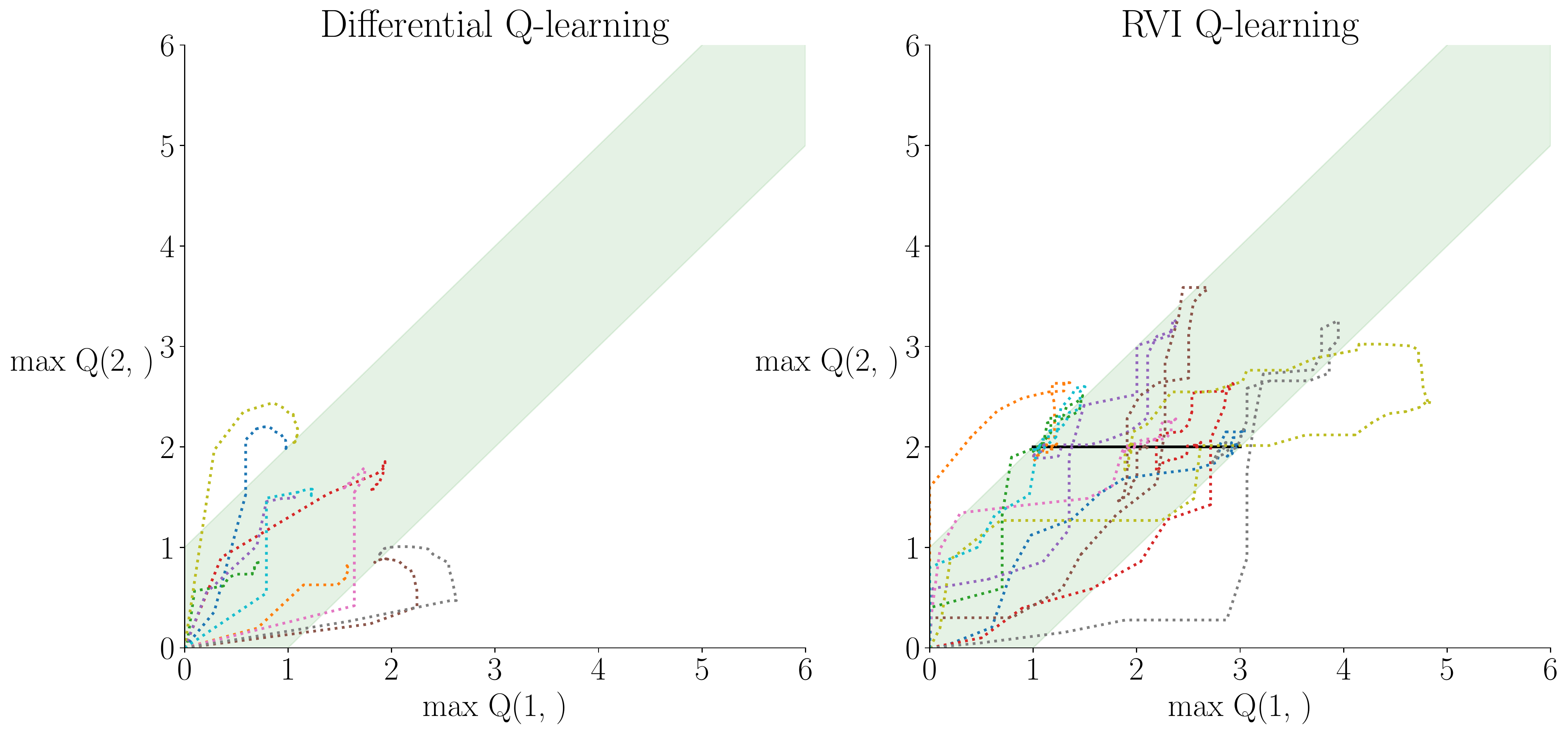}
    \end{subfigure}
    \caption{Evolution of estimated action values of Differential Q-learning and RVI Q-learning in the weakly communicating MDP shown in \cref{fig: wc mdp experiment}.}
    \label{fig: empirical validation weakly communicating}
\end{figure*}

On the other hand, The solution set of RVI Q-learning with the choice of the reference function $f(q) = q(1, \texttt{dashed})$ does not depend on the action values associated with states in the communicating class when entering the class. Therefore the solution set did not vary across different runs. Note that if we chose $f(q) = q(0, \texttt{dashed})$, then again the solution set of RVI Q-learning has that dependence.

\section{Gosavi's (2004) Convergence Result Is Incorrect}\label{sec: Gosavi incorrect proof}
The convergence result of Gosavi's proposed algorithm is presented in Theorem 2 of his paper. In the proof of the theorem, they used Borkar's two-time scale stochastic approximation result to prove the convergence of the proposed algorithm. Specifically, they argued that their algorithm is a special case of the general class of algorithms considered in Borkar's result. As Gosavi quotes, "Note that the Eqs. (48) and (49) for SMDPs form a special case of the general class of algorithms (29) and (30) analyzed using the lemma given in Section 5.1.1. " However, a closer look at these equations shows that equation (49) is not a special case of equation (30). Note that because $\rho^k$ is a scalar, $y^k$ only has one element and thus the $f$ function in equation (30) does not vary across different state-option pairs. However, this is not true for the $f$ function in equation (49). It appears to us that there is no simple fix for this issue.
\end{document}